\pgfplotsset{width=10cm,compat=1.9}
\newtheorem{prop}{Proposition}
\newtheorem{theor}{Theorem}
\newtheorem{corol}{Corollary}
\newtheorem{lemma}{Lemma}
\newtheorem{defin}{Definition}
\begin{document}

\twocolumn[
\icmltitle{Elementary superexpressive activations}

\begin{icmlauthorlist}
\icmlauthor{Dmitry Yarotsky}{sk}
\end{icmlauthorlist}
\icmlaffiliation{sk}{Skolkovo Institute of Science and Technology, Moscow, Russia. E-mail: \texttt{d.yarotsky@skoltech.ru}}

\vskip 0.3in
]
\printAffiliationsAndNotice{}

\begin{abstract} We call a finite family of activation functions \emph{superexpressive} if any multivariate continuous function can be approximated by a neural network that uses these activations and has a fixed architecture only depending on the number of input variables (i.e., to achieve any accuracy we only need to adjust the weights, without increasing the number of neurons). Previously, it was known that superexpressive activations exist, but their form was quite complex. We give examples of very simple superexpressive families: for example, we prove that the family $\{\sin, \arcsin\}$ is superexpressive. We also show that most practical activations (not involving periodic functions) are not superexpressive. 
\end{abstract}

\section{Introduction}
In the study of approximations by neural networks, an interesting fact is the existence of activation functions that allow to approximate any continuous function on a given compact domain with arbitrary accuracy by using a network with a finite, fixed architecture independent of the function and the accuracy (i.e., merely by adjusting the network weights, without increasing the number of neurons). We will refer to this property as ``superexpressiveness''. The existence of superexpressive activations can be seen as a consequence of a result of \cite{maiorov1999lower}. 
\begin{theor}[\citealt{maiorov1999lower}]\label{th:maiorov}
There exists an activation function $\sigma$ which is real analytic, strictly increasing, sigmoidal (i.e., $\lim_{x\to -\infty}\sigma(x)=0$ and $\lim_{x\to +\infty}\sigma(x)=1$), and such that any $f\in C([0,1]^d)$ can be uniformly approximated with any accuracy by expressions $\sum_{i=1}^{6d+3}d_i\sigma(\sum_{j=1}^{3d}c_{ij}\sigma(\sum_{k=1}^d w_{ijk}x_k+\theta_{ij})+\gamma_i)$ with some parameters $d_i,c_{ij},w_{ijk},\theta_{ij},\gamma_i$. 
\end{theor}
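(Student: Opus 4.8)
The plan is to reduce the multivariate statement to a univariate one via Kolmogorov's superposition theorem and then to build $\sigma$ so that a \emph{bounded} number of $\sigma$-units reproduces an arbitrary univariate continuous function. First I would invoke Kolmogorov's superposition theorem in the Kolmogorov--Lorentz--Sprecher form: there are constants $\lambda_1,\dots,\lambda_d>0$, a shift $a>0$ and a fixed continuous increasing $\psi\colon\mathbb R\to\mathbb R$, all independent of $f$, such that every $f\in C([0,1]^d)$ equals $\sum_{q=0}^{2d}g\bigl(\sum_{p=1}^d\lambda_p\psi(x_p+qa)\bigr)$ for some $g\in C(\mathbb R)$ depending on $f$. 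Thus $f$ is a sum of $2d+1$ ``units'', the $q$-th being the univariate $g$ composed with the feature $\Phi_q(\mathbf x)=\sum_{p=1}^d\lambda_p\psi(x_p+qa)$, itself a sum of $d$ univariate functions of single coordinates. So it suffices to prove a ``$3$-unit universality'' of a suitable $\sigma$: for every compact interval $K$, every $h\in C(K)$ and every $\delta>0$ there are reals $\gamma_r,\alpha_r,\beta_r$ with $\bigl\|h-\sum_{r=1}^3\gamma_r\sigma(\alpha_r\,\cdot+\beta_r)\bigr\|_{C(K)}<\delta$. Granting that, I would realize each of the $d$ univariate inner functions $x_p\mapsto\psi(x_p+qa)$ (suitably scaled) by a packet of $3$ first-layer $\sigma$-units and each of the $2d+1$ copies of $g$ by $3$ second-layer $\sigma$-units; since $6d+3=3(2d+1)$ and the inner width per output unit is then $3d$, this is exactly the architecture $\sum_{i=1}^{6d+3}d_i\sigma(\sum_{j=1}^{3d}c_{ij}\sigma(\sum_k w_{ijk}x_k+\theta_{ij})+\gamma_i)$, the constants $\lambda_p$ and all additive constants being absorbed into $c_{ij},\theta_{ij},\gamma_i$ (a near-constant unit is $\sigma(0\cdot t+N)$ with $N$ large).

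For the construction of $\sigma$ I would fix a family $\{h_m\}_{m\ge1}\subset C^1([0,1])$ dense in $C([0,1])$, arranged with repetitions so that $\{m:\|h_m-g\|_\infty<\delta\}$ is infinite for every $g,\delta$; disjoint unit intervals $I_m=[t_m,t_m+1]$ with $t_m\to+\infty$; numbers $\mu_m>0$ with $\sum_m\mu_m<1$; and $\varepsilon_m>0$ with $2\varepsilon_m(\sup_{[0,1]}|h_m|+1)<1$. Then I would choose a continuous $\rho\colon\mathbb R\to(0,\infty)$ with $\int_{\mathbb R}\rho=1$, equal on each $I_m$ to $\mu_m(1+\varepsilon_m h_m(\,\cdot-t_m))/\!\int_0^1(1+\varepsilon_m h_m)$ (so $\int_{I_m}\rho=\mu_m$), and small, integrable and vanishing at $\pm\infty$ off $\bigcup_m I_m$. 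By Carleman's theorem on entire approximation there is an entire real function $w$ with $|w-\log\rho|<\epsilon$ pointwise for any prescribed positive continuous $\epsilon$; take $\epsilon\le1$ everywhere and $\epsilon\le\tau_m$ on $I_m$, with $\tau_m\to 0$ to be fixed below. Put $\sigma(x)=\int_{-\infty}^x e^{w(t)}\,dt$. Then $\sigma$ is real analytic (an antiderivative of the entire function $e^w$) and strictly increasing since $e^w>0$; and since $e^w\le e\rho$ is integrable with $\log\rho\to-\infty$, we get $\lim_{x\to-\infty}\sigma=0$ and $\lim_{x\to+\infty}\sigma\in(0,\infty)$, which a final affine normalization sends to $1$.

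To verify universality, note that on $I_m$ one has $\sigma(t_m+s)-\sigma(t_m)=\int_{t_m}^{t_m+s}\rho\,e^{\,w-\log\rho}=\mu_m\Psi_m(s)+O(\tau_m)$, where $\Psi_m(s)=\int_0^s(1+\varepsilon_m h_m)/\!\int_0^1(1+\varepsilon_m h_m)$; hence for small $\eta>0$, $\bigl(\sigma(t_m+s+\eta)-\sigma(t_m+s)\bigr)/(\mu_m\eta)=\Psi_m'(s)+O\bigl(\eta\varepsilon_m+\tau_m/(\mu_m\eta)\bigr)$, and since $\Psi_m'(s)=(1+\varepsilon_m h_m(s))/\!\int_0^1(1+\varepsilon_m h_m)$, an explicit affine function of this divided difference equals $h_m(s)$ with error $O\bigl(\eta+\tau_m/(\varepsilon_m\mu_m\eta)\bigr)$; spelling out the two values of $\sigma$ and rewriting the additive constant as a near-constant $\sigma$-unit expresses it as a combination of three affinely reparametrized copies of $\sigma$. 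Given $h$ and $\delta$, I would pick $m$ with $\|h_m-h\|_\infty<\delta/2$ from the infinitely many available and then, having reserved enough freedom in the construction (e.g.\ $\tau_m\le\varepsilon_m\mu_m m^{-2}$ and $\eta\asymp(\tau_m/(\varepsilon_m\mu_m))^{1/2}$), make the displayed error $<\delta/2$ along that index set; an outer affine change of variable gives the statement on a general compact $K$. Finally I would assemble: substitute these univariate approximations into the Kolmogorov decomposition and propagate the first-layer errors through the second-layer nonlinearity using uniform continuity of $g$ on the fixed compact range of each $\Phi_q$, obtaining a uniform $\varepsilon$-approximation of $f$ by the stated fixed network.

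\textbf{Main obstacle.} The hard part is the construction of $\sigma$: making it \emph{simultaneously} real analytic, strictly increasing and sigmoidal while still ``storing'' enough structure that a network whose size depends only on $d$ can approximate everything. Real analyticity forbids $\sigma$ from literally coinciding with a polynomial on any interval, which is what forces the approximate encoding through $\rho$ and the delicate joint tuning of the three scales $\varepsilon_m$ (encoding amplitude), $\eta$ (divided-difference step) and $\tau_m$ (analytic-smoothing budget), all while keeping $e^w$ integrable so that $\sigma$ stays sigmoidal. A secondary, bookkeeping obstacle is landing on the exact widths $6d+3$ and $3d$ rather than merely $O(d)$ and $O(d^2)$, which is what ties the argument to the sharp $2d+1$-term superposition theorem and to using a \emph{bounded} (here three) number of $\sigma$-units per univariate function.
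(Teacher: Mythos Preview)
Your proposal follows essentially the same two-step strategy the paper attributes to Maiorov--Pinkus: reduce the $d$-variate problem to the univariate one via the Kolmogorov Superposition Theorem, and build $\sigma$ by encoding a countable dense subset of $C([0,1])$ so that any univariate continuous function is approximable by a fixed (here, three) number of $\sigma$-units. The paper does not actually prove this theorem---it only cites it and gives that two-sentence outline---so your proposal is best read as a fleshing-out of the sketch, and it does so correctly: the widths $6d+3=3(2d+1)$ and $3d$ come out for the right reason, and the encoding/extraction via $\sigma'\approx \mu_m(1+\varepsilon_m h_m)$, divided differences, and Carleman's theorem is a legitimate way to obtain a real-analytic, strictly increasing, sigmoidal $\sigma$. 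Two small points worth tightening when you write it out: (i) the divided difference $\sigma(t_m+s+\eta)-\sigma(t_m+s)$ leaves $I_m$ for $s\in(1-\eta,1]$, so either enlarge $I_m$ slightly or use uniform continuity to cover that edge; (ii) when propagating inner-layer errors through the outer layer, note that the relevant Lipschitz constant is that of the three-term $\sigma$-combination (with coefficients of size $\sim 1/(\varepsilon_m\mu_m\eta)$), not of $g$ itself, so choose the inner accuracy \emph{after} fixing the outer approximation.
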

The proof of this theorem includes two essential steps. In the first step, the result is proved for univariate functions (i.e., for $d=1$). The key idea here is to use the separability of the space $C([0,1])$, and construct a (quite complicated) activation by joining all the functions from some dense countable subset.  In the second step, one reduces the multivariate case to the univariate one by using the Kolmogorov Superposition Theorem (KST).

Though the superexpressive activations constructed in the proof of Theorem \ref{th:maiorov} have the nice properties of analyticity, monotonicity and boundedness, they are nevertheless quite complex and non-elementary -- at least, not known to be representable in terms of finitely many elementary functions. See the papers \cite{ismailov2014approximation, guliyev2016single, guliyev2018approximation1, guliyev2018approximation2} for refinements and algorithmic aspects of such and similar activations, as well as the papers \cite{kuurkova1991kolmogorov, kuurkova1992kolmogorov, igelnik2003kolmogorov, montanelli2020error, schmidt2020kolmogorov} for further connections between KST and neural networks.  

There is, however, another line of research in which some weaker forms of superexpressiveness have been recently established for  elementary (or otherwise simple) activations. The weaker form means that the network must grow to achieve higher accuracy, but this growth is much slower than the power laws expected from the abstract approximation theory under standard regularity assumptions \cite{devore1989optimal}. In particular, results of \cite{yarotsky2019phase} imply that a deep network having both $\sin$ and ReLU activations can approximate Lipschitz functions with error $O(e^{-cW^{1/2}})$, where $c>0$ is a constant and $W$ is the number of weights. Results of \cite{shen2020neural} (see also \cite{shen2020deep}) imply that a three-layer network using the floor $\lfloor\cdot\rfloor$, the exponential $2^x$ and the step function $\mathbf 1_{x\ge 0}$ as activations can approximate Lipschitz functions with an exponentially small error $O(e^{-cW})$.

In the present paper, we show that there are activations superexpressive in the initially mentioned strong sense and yet constructed using simple elementary functions; see Section \ref{sec:super}. For example, we prove that there are fixed-size networks with the activations $\sin$ and $\arcsin$ that can approximate any continuous function with any accuracy. On the other hand, we show in Section \ref{sec:absence} that most practically used activations (not involving periodic functions) are not superexpressive.           

\section{Elementary superexpressive families}\label{sec:super}

\begin{figure}
\begin{center}
\begin{tikzpicture}[scale=1.25]
\node [draw, red, fill=red, rotate=45] (In1) at (0,-0.7) {};
\node [draw, red, fill=red, rotate=45] (In2) at (0,0) {};
\node [draw, red, fill=red, rotate=45] (In3) at (0,0.7) {};

\node [draw, circle, blue, fill=blue] (H1) at (1.1,-0.9) {};
\node [draw, circle, blue, fill=blue] (H2) at (1,-0.4) {};
\node [draw, circle, blue, fill=blue] (H3) at (0.8, 0.3) {};
\node [draw, circle, blue, fill=blue] (H4) at (1, 0.9) {};

\node [draw, circle, blue, fill=blue] (H5) at (2,-0.7) {};
\node [draw, circle, blue, fill=blue] (H6) at (2.1,-0.1) {};
\node [draw, circle, blue, fill=blue] (H7) at (1.9, 0.7) {};

\node [draw, black!40!green, fill=black!40!green] (Out) at (3,0) {};

\draw[thick,->] (In1) edge (H1) (H1) edge (H5) (H5) edge (Out) (In1) edge (H2) (H2) edge (H6) (H6) edge (Out) (In1) edge (H3) (H3) edge (H6) (In2) edge (H2) (In2) edge (H3) (In3) edge (H4) (H4) edge (H7) (H7) edge (Out) (In3) edge (Out) (H1) edge (H7) (In1) edge (H7);

\node at (-0.3, 0.3) {in};
\node at (3.2, 0.3) {out};
\end{tikzpicture}
\caption{An example of network architecture with 3 input neurons, 1 output neuron and 7 hidden neurons.}\label{fig:net}
\end{center}
\end{figure}
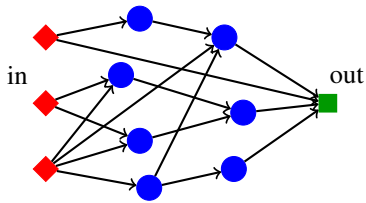

Throughout the paper, we consider standard feedforward neural networks. The architecture of the network is defined by a directed acyclic graph connecting the neurons (see Fig.~\ref{fig:net}). A network implementing a scalar $d$-variable function has $d$ input neurons, one output neuron and a number of hidden neurons. A hidden neuron computes the value $\sigma(\sum_{i=1}^nw_iz_i+h),$ where $w_i$ and $h$ are the weights associated with this neuron, $z_i$ are the incoming connections from other hidden or input neurons, and $\sigma$ is an activation function. We will generally allow different hidden neurons to have different activation functions. The output neuron computes the value $\sum_{i=1}^nw_iz_i+h$ without an activation function. 

Some of our activations (in particular, $\arcsin$) are naturally defined only on a subset of $\mathbb R$. In this case we ensure that the inputs of these activations always belong to this subset. 

Throughout the paper, we consider approximations of functions $f\in C([0,1]^d)$ in the uniform norm $\|\cdot\|_\infty$. We generally denote vectors by boldface letters; the components of a vector $\mathbf x$ are denoted $x_1,x_2,\ldots$.

We give now the key definition of the paper.  
\begin{defin}
We call a finite family $\mathcal A$ of univariate activation functions  \emph{superexpressive for dimension $d$} if there exists a fixed $d$-input network architecture with each hidden neuron equipped with some fixed activation from the family $\mathcal A$, so that any function $f\in C([0,1]^d)$ can be approximated on $[0,1]^d$ with any accuracy in the uniform norm $\|\cdot\|_\infty$ by such a network, by adjusting the network weights. We call a family $\mathcal A$ simply \emph{superexpressive} if it is superexpressive for all $d=1,2,\ldots$ We refer to respective architectures as \emph{superexpressive for $\mathcal A$}.
\end{defin}

Recall that the Kolmogorov Superposition Theorem (KST) \cite{kolmogorov1957representation} proves that any multivariate continuous function can be expressed via additions and univariate continuous functions. The following version of this theorem is taken from \cite{maiorov1999lower}.  

\begin{theor}[KST] There exist $d$ constants $\lambda_j>0, j=1,\ldots,d, \sum_{j=1}^d \lambda_j\le 1$, and $2d+1$ continuous strictly increasing functions $\chi_i,i=1,\ldots,2d+1,$ which map $[0,1]$ to itself, such that every $f\in C([0,1]^d)$ can be represented in the form 
$$f(x_1,\ldots,x_d)=\sum_{i=1}^{2d+1}g\Big(\sum_{j=1}^d\lambda_j\chi_i(x_j)\Big)$$ 
for some $g\in C([0,1])$ depending on $f.$
\end{theor}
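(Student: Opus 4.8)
The plan is to follow the classical two‑stage strategy behind the Kolmogorov Superposition Theorem, in which the ``inner'' data — the weights $\lambda_j$ and the functions $\chi_i$ — are built once and for all, independently of $f$, and only the ``outer'' function $g$ depends on $f$ and is recovered afterwards by a convergent iteration.

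\textbf{Stage 1 (inner functions).} I would fix positive weights $\lambda_1,\ldots,\lambda_d$ with $\sum_{j=1}^d\lambda_j\le1$ (rationally independent ones are convenient) and build each $\chi_i$ as the uniform limit of an inductively refined sequence of strictly increasing maps $[0,1]\to[0,1]$. At stage $k$ one works with a (not necessarily uniform) partition of $[0,1]$ into finitely many intervals and the induced product cubes in $[0,1]^d$; for each $i\in\{1,\ldots,2d+1\}$ a subfamily of these cubes is declared \emph{good for $i$ at stage $k$}, the good families being mutually staggered so that a pigeonhole argument forces every $\mathbf x\in[0,1]^d$ to lie in the interior of a cube that is good for at least $d+1$ of the indices $i$. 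The refinement of the $\chi_i$ at stage $k$ is chosen so that the embedding $\Phi_i(\mathbf x)=\sum_{j=1}^d\lambda_j\chi_i(x_j)$ sends the good cubes to subintervals of $[0,1]$ that are pairwise disjoint with gaps of a definite size, and so that the successive approximants of each $\chi_i$ stay close enough to converge to a strictly increasing limit still mapping $[0,1]$ onto itself with the separation property intact. This can be organised either as an explicit construction of piecewise‑linear approximants or via a Baire‑category argument in an appropriate product of complete spaces of monotone functions.

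\textbf{Stage 2 (outer function).} With the $\Phi_i$ fixed, I would first establish the one‑step estimate: for every $h\in C([0,1]^d)$ and every $\varepsilon>0$ there is $g^\ast\in C([0,1])$ with $\|g^\ast\|_\infty\le\tfrac{1}{2d+1}\|h\|_\infty$ and $\bigl\|h-\sum_{i=1}^{2d+1}g^\ast(\Phi_i(\cdot))\bigr\|_\infty\le\bigl(\tfrac{2d}{2d+1}+\varepsilon\bigr)\|h\|_\infty$. For this, choose a stage $k$ so fine that $h$ oscillates by at most $\varepsilon\|h\|_\infty$ on each cube, and on every interval $\Phi_i(Q)$ with $Q$ good for $i$ let $g^\ast$ equal $\tfrac{1}{2d+1}h(\mathbf x_Q)$, with $\mathbf x_Q$ the centre of $Q$, tapering it continuously to $0$ across the gaps and setting $g^\ast=0$ outside all such intervals — this is unambiguous because the good intervals are pairwise disjoint. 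At any $\mathbf x$, at least $d+1$ of the terms $g^\ast(\Phi_i(\mathbf x))$ equal $\tfrac{1}{2d+1}h(\mathbf x)$ up to an error of $\varepsilon\|h\|_\infty$, while each of the remaining at most $d$ terms has modulus $\le\tfrac{1}{2d+1}\|h\|_\infty$; since the number $m$ of good indices satisfies $d+1\le m\le 2d+1$, summing gives the claimed bound. Now iterate: put $f_0=f$, fix $\varepsilon$ so small that $\vartheta:=\tfrac{2d}{2d+1}+\varepsilon<1$, and given $f_{n-1}$ let $g_n$ be the $g^\ast$ produced for $h=f_{n-1}$ and set $f_n=f_{n-1}-\sum_{i=1}^{2d+1}g_n(\Phi_i(\cdot))$. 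Then $\|f_n\|_\infty\le\vartheta^n\|f\|_\infty\to0$ and $\|g_n\|_\infty\le\tfrac{\vartheta^{n-1}}{2d+1}\|f\|_\infty$, so $g:=\sum_{n\ge1}g_n$ converges uniformly in $C([0,1])$ and $f(x_1,\ldots,x_d)=\sum_{i=1}^{2d+1}g\bigl(\sum_{j=1}^d\lambda_j\chi_i(x_j)\bigr)$.

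\textbf{Main obstacle.} The difficulty is entirely in Stage 1: producing the partitions together with the functions $\chi_i$ so that disjointness‑with‑gaps of the good‑cube images holds \emph{simultaneously at every stage and for all $2d+1$ indices} — which, because of the tension between making the partitions fine and the limited oscillation available to a single increasing function, forces a strongly non‑uniform and quite intricate construction. It is this combinatorial core, in the stronger single‑$g$ form (as opposed to the Kolmogorov--Arnold version with $2d+1$ distinct outer functions), that is the content reproduced from \cite{maiorov1999lower}; Stage 2 is then a routine geometric‑series argument once the embeddings are in hand.
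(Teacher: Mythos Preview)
The paper does not prove this statement: KST is quoted as a classical result of Kolmogorov, with this particular formulation lifted verbatim from \cite{maiorov1999lower}, and no proof or sketch is supplied. So there is nothing in the paper to compare your argument against.

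For what it is worth, your outline is a faithful summary of the standard Lorentz-style proof of the single-$g$ version of KST, and the one-step contraction estimate in Stage~2 is set up correctly (the ratio $\tfrac{2d}{2d+1}$ coming from ``at least $d+1$ good indices out of $2d+1$'', with the remaining at most $d$ terms each bounded by $\tfrac{1}{2d+1}\|h\|_\infty$). You also correctly identify that the substance lies entirely in Stage~1, and that what you have written there is a plan rather than a proof: the combinatorics of building the staggered partitions so that the $\Phi_i$-images of good cubes are disjoint with gaps at \emph{every} scale, while keeping the $\chi_i$ strictly increasing and convergent, is genuinely delicate, and neither the explicit piecewise-linear construction nor the Baire-category route is short. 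Since the paper treats KST purely as a black box feeding into Corollary~\ref{corol:1}, this is already more than is needed here.
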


An immediate corollary of this theorem is a reduction of multivariate superexpressiveness to the univariate one. 

\begin{corol}\label{corol:1}
If a family $\mathcal A$ is superexpressive for dimension $d=1$, then it is superexpressive for all $d$. Moreover, the number of neurons and connections in the respective superexpressive architectures scales as $O(d^2)$.
\end{corol}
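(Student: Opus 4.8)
The plan is to feed the Kolmogorov Superposition Theorem into the $d=1$ hypothesis: approximate each of the $2d+1$ outer functions and each of the $(2d+1)d$ inner functions $\chi_i$ by its own copy of a single fixed univariate superexpressive architecture, wire these copies together according to the KST formula, and then bound the error propagated through the composition. The case $d=1$ is the hypothesis itself, so fix $d\ge 2$ and let $f\in C([0,1]^d)$. By KST, $f(x_1,\ldots,x_d)=\sum_{i=1}^{2d+1} g\bigl(\sum_{j=1}^d\lambda_j\chi_i(x_j)\bigr)$ with $\lambda_j>0$ and the $\chi_i$ fixed (independent of $f$), and $g\in C([0,1])$ depending on $f$. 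Since $s_i:=\sum_j\lambda_j\chi_i(x_j)\in[0,\sum_j\lambda_j]\subseteq[0,1]$, I extend $g$ once and for all to a function $\tilde g\in C([-1,2])$ (e.g.\ constant outside $[0,1]$); the representation is unchanged on $[0,1]^d$.

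Let $\mathsf A$ be a fixed one-input architecture that is superexpressive for $\mathcal A$ in dimension $1$. By an affine change of the input variable, which is absorbed into the first-layer edge weights and biases, the same architecture uniformly approximates any $h\in C([a,b])$ on any fixed interval $[a,b]$; in particular it can approximate each $\chi_i$ on $[0,1]$ and $\tilde g$ on $[-1,2]$. The internal domain constraints of activations such as $\arcsin$ are automatically respected, since we only reuse $\mathsf A$ with admissible weights. Now assemble the $d$-input architecture: take $2d+1$ parallel blocks; block $i$ contains $d$ copies of $\mathsf A$, the $j$-th receiving $x_j$ and meant to output an approximation of $\chi_i(x_j)$. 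Since the output neuron of each copy of $\mathsf A$ carries no activation, I connect each of these $d$ output neurons, with edge weights $\lambda_j$ times the corresponding original weights, directly into the first hidden layer of one further copy of $\mathsf A$, meant to approximate $\tilde g$; the network's output neuron then sums the $2d+1$ block outputs. This graph depends only on $d$ and on the fixed $\mathsf A$.

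Given $\varepsilon>0$, choose the weights so that each inner copy approximates $\chi_i$ to within $\varepsilon_1\le 1$ on $[0,1]$ — hence its output lies in $[-1,2]$, the domain on which the $\tilde g$-copy is set up — and the $\tilde g$-copy approximates $\tilde g$ to within $\varepsilon_2$ on $[-1,2]$. Writing $\omega$ for the modulus of continuity of $\tilde g$, the input fed to the $\tilde g$-copy of block $i$ differs from $s_i$ by at most $\sum_j\lambda_j\varepsilon_1\le\varepsilon_1$, so the output of block $i$ differs from $g(s_i)$ by at most $\omega(\varepsilon_1)+\varepsilon_2$, and the whole network differs from $f$ by at most $(2d+1)\bigl(\omega(\varepsilon_1)+\varepsilon_2\bigr)$, which is $<\varepsilon$ once $\varepsilon_1,\varepsilon_2$ are small enough. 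If $\mathsf A$ has $N$ neurons and $E$ connections, the assembled architecture has at most $(2d+1)(d+1)N+d+1=O(d^2)$ neurons and $O(d^2)$ connections.

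The one point deserving care — and the only real obstacle — is that the composite architecture must be genuinely independent of $f$ and of $\varepsilon$. This is why $g$ is extended in advance to the fixed interval $[-1,2]$, so that a single rescaled copy of $\mathsf A$ serves for all $\tilde g$ and all accuracies, and why the inner approximations are kept within the margin $\varepsilon_1\le 1$, which keeps their outputs inside $[-1,2]$ regardless of the targeted accuracy. Beyond this bookkeeping the argument is routine: all the substantive content sits in KST and in the $d=1$ hypothesis.
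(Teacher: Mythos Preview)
Your proof is correct and follows exactly the same approach as the paper: approximate the inner functions $\chi_i$ and the outer function $g$ from the Kolmogorov Superposition Theorem by copies of the fixed univariate superexpressive architecture, and wire them together. The paper states this in a single sentence, whereas you have carefully spelled out the wiring, the domain bookkeeping (extending $g$ to $[-1,2]$ and rescaling inputs), the error propagation, and the $O(d^2)$ neuron count; all of this is sound and none of it departs from the paper's intended argument.
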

The proof follows simply by approximating the functions $\chi_i$ and $g$ in the KST by univariate superexpressive networks.

Our main result establishes existence of simple superexpressive families constructed from finitely many elementary functions. The full list of properties of the activations that we use is relatively cumbersome, so we find it more convenient to just prove the result for a few particular examples rather than attempt to state it in a general form.

\begin{theor}\label{th:main}
Each of the following families of activation functions is superexpressive:
\begin{align*}
\mathcal A_1={}&\{\sigma_1, \lfloor\cdot\rfloor\},\\
\mathcal A_2={}&\{\sin,\arcsin\},\\
\mathcal A_3={}&\{\sigma_3\},
\end{align*}
where $\sigma_1$ is any function that is real analytic and non-polynomial in some interval $(\alpha,\beta)\subset\mathbb R$, and 
$$\sigma_3(x)=\begin{cases}-\tfrac{1}{x}, & x<-1,\\
\tfrac{1}{\pi}(x\arcsin x+\sqrt{1-x^2})+\tfrac{3}{2}x, & x\in[-1,1],\\
7-\tfrac{3}{x}+\tfrac{\sin x}{\pi x^2}, & x>1.
\end{cases}
$$
The function $\sigma_3$ is $C^1(\mathbb R)$, bounded, and strictly monotone increasing.
\end{theor}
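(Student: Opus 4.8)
The plan is to reduce everything to dimension $d=1$, reduce that to approximating piecewise-constant functions, isolate an elementary ``toolkit'' of primitive operations available in each family, and then prove one decoding lemma that does the real work.

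\emph{Reductions.} By Corollary~\ref{corol:1} it suffices to show each $\mathcal A_i$ is superexpressive for $d=1$; the multivariate statement then follows by replacing $g$ and the $\chi_i$ in the KST representation by one-dimensional superexpressive subnetworks, giving $O(d^2)$ size. So fix a family, fix $f\in C([0,1])$ and $\varepsilon>0$. By uniform continuity there is $N$ with $\|f-f_N\|_\infty<\varepsilon$, where $f_N(x)=c_{\lfloor Nx\rfloor}$ and $c_n:=f(n/N)$; since also $|c_{n+1}-c_n|\le\varepsilon$, any network that equals $c_n$ on the interior of $[n/N,(n+1)/N)$ away from the endpoints and merely interpolates between consecutive $c_n$'s near the endpoints is still within $O(\varepsilon)$ of $f$ uniformly. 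After an affine normalization we may take $c_n\in[0,1]$. Thus everything reduces to: \emph{a single fixed one-input architecture which, for every $N$ and every $(c_0,\dots,c_{N-1})\in[0,1]^N$, can be weighted to approximate $x\mapsto c_{\lfloor Nx\rfloor}$ on $[0,1]$.}

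\emph{The toolkit.} For each family I would first build fixed-size subnetworks implementing (exactly, or to any accuracy on compacts) the primitives: the product $(u,v)\mapsto uv$ on bounded squares, a ``staircase/sawtooth'' function providing $x\mapsto\lfloor Nx\rfloor$ (or a sufficiently sharp continuous surrogate) for arbitrary $N$, and the identity on bounded intervals. For $\mathcal A_1=\{\sigma_1,\lfloor\cdot\rfloor\}$ the sawtooth is immediate, and products and polynomials come from the classical fact that finite differences of a real-analytic non-polynomial $\sigma_1$ reproduce every monomial uniformly on compacts. For $\mathcal A_2=\{\sin,\arcsin\}$, multiplication is \emph{exact} on $[-1,1]^2$ (hence, after rescaling, on any $[-R,R]^2$) through the product-to-sum identity $uv=\tfrac12\sin\big(\arcsin u-\arcsin v+\tfrac\pi2\big)-\tfrac12\sin\big(\arcsin u+\arcsin v+\tfrac\pi2\big)$, while $\arcsin\circ\sin$ is, up to affine changes of variables, precisely the triangle wave, from which the needed quantizer is obtained. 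For $\mathcal A_3=\{\sigma_3\}$ one checks the stated regularity and monotonicity of $\sigma_3$ directly, and observes that $\sigma_3$ is engineered so that, from finite differences and linear combinations of affinely transformed copies of its three branches, one recovers approximations to $\sin$ (from $x>1$), to the antiderivative of $\arcsin$ and hence, by differencing, to $\arcsin$ itself (from $|x|\le 1$), and to $x\mapsto 1/x$ (from $x<-1$); this places $\mathcal A_3$ in the same situation as $\mathcal A_2$.

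\emph{The decoding lemma, and the main obstacle.} The crux is to show that, using only the toolkit, a fixed architecture can approximate $x\mapsto c_{\lfloor Nx\rfloor}$ for arbitrary $N$ and arbitrary data; this is where periodicity is indispensable, in harmony with the impossibility results of Section~\ref{sec:absence}, and I expect it to be the hard part. The natural route is an equidistribution (Kronecker-type) argument: first quantize $x$ to $t_n:=n/N$ on each interval using the staircase primitive, then pass $t_n$ through a fixed mildly nonlinear map (built, say, from $\arcsin$) so that the resulting ``readout points'' are $\mathbb Q$-linearly independent together with $1$, so that by Kronecker's theorem one can choose a single (possibly enormous) weight $\lambda$ and phase $\mu$ making $\sin(\lambda\cdot(\text{readout point}_n)+\mu)$ simultaneously arbitrarily close to the prescribed $2c_n-1$ for all $n$; composing with $\tfrac12+\tfrac12(\cdot)$ then yields the piecewise-constant output. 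The delicate points will be: verifying the $\mathbb Q$-linear independence of the readout points for every $N$ (a transcendence-type input), controlling the interplay between the huge weight $\lambda$ and the accuracy of the quantizer (which is why an \emph{exact} floor as in $\mathcal A_1$, or a carefully engineered sharpening of the triangle wave and use of the endpoint behaviour of $\arcsin$ as in $\mathcal A_2,\mathcal A_3$, is needed), and keeping the number of neurons independent of $N$ — this last requirement is precisely what distinguishes the present strong superexpressiveness from the weaker, architecture-growing constructions of \cite{yarotsky2019phase,shen2020neural}. Once the decoding lemma is in hand, combining it with the quantizer from the toolkit finishes $d=1$, and the reductions above finish all $d$.
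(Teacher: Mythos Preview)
Your overall plan---KST reduction to $d=1$, quantize to a finite index set, then apply Kronecker/irrational-winding density---matches the paper's, and your reduction of $\mathcal A_3$ to $\mathcal A_2$ (recover $\arcsin$ by differencing its antiderivative, recover $\sin$ from the $x>1$ branch) is exactly right. But the two ``delicate points'' you flag are precisely where concrete ideas are missing, and neither of your suggested fixes works as stated. For rational independence you propose a ``transcendence-type input'' on values such as $\arcsin(n/N)$; that is hard in general and, for an \emph{arbitrary} real-analytic non-polynomial $\sigma_1$, simply unavailable. The paper instead inserts a free scale parameter $w$ into the readout and proves an elementary lemma: if $\sigma$ is real analytic and non-polynomial near $c$, then for \emph{some} small $w$ the values $(\sigma(c+wn))_{n=1}^N$ are $\mathbb Q$-linearly independent, because otherwise a countable union of zero sets of the analytic functions $w\mapsto\sum_n\lambda_n\sigma(c+wn)$ would cover a neighbourhood of $0$, forcing one such function to vanish identically, and Taylor-expanding at $w=0$ then gives $\sum_n\lambda_n n^m=0$ for every $m$ with $\sigma^{(m)}(c)\ne 0$, hence $\sigma$ is a polynomial. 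This lemma (applied to $\sigma_1$ for $\mathcal A_1$ and to $\sin$ for $\mathcal A_2$) replaces all number theory.

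For $\mathcal A_2$ your proposed ``sharpening'' of the triangle wave cannot tame the huge Kronecker weight: on any transition region of a continuous quantizer, however narrow, the argument of the subsequent $\sin(\lambda\,\cdot\,)$ sweeps through a full unit interval, so the output fills all of $[-1,1]$ there and does \emph{not} merely interpolate between consecutive $c_n$'s---your reduction paragraph therefore does not apply. The paper's remedy is to observe that $\nu(x)=x+\tfrac1\pi\arcsin(\sin\pi x)$ is \emph{exactly} constant on half of each period (genuine plateaus, not approximate ones), and then to build a four-term partition of unity $1\equiv\sum_{q=-1}^{2}\psi(Mx-q/2)$, with the bump $\psi$ itself expressed through $\nu$ and $\theta$, so that the support of each $\psi_q$ lies inside the plateau region of the correspondingly shifted copy of $\nu$. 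On the plateaus the quantizer is exact, so the large weight is harmless; off them $\psi_q=0$, so the wild oscillation is multiplied away (your exact product-to-sum multiplier, or the paper's approximate second-difference multiplier, serves here). This exact-plateau-plus-partition-of-unity device is the missing structural ingredient for $\mathcal A_2$, and hence for $\mathcal A_3$.
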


The family $\mathcal A_1$ is a generalization of the family for which \cite{shen2020neural} proved a weaker superexpressiveness property. 

The function $\sigma_3$ is given as an example of an explicit superexpressive activation that is smooth and sigmoidal (see Fig.~\ref{fig:sigma3}).

\begin{figure}
\begin{center}
\begin{tikzpicture}
\begin{axis}[scale=0.7, xlabel=$x$, ylabel=$\sigma_3(x)$]
\addplot[domain=-7:-1, 
         samples=200, 
         color=blue, 
         thick]{-1/x)};
\addplot[domain=-1:1, 
         samples=100, 
         color=blue, 
         thick]{1/pi*(x*pi/180*asin(x)+sqrt(1-x*x))+1.5*x+2};
\addplot[domain=1:7, 
         samples=200, 
         color=blue, 
         thick]{7-3/x+(1/pi)*sin(180*x)/(x*x)};
\end{axis}
\end{tikzpicture}
\caption{The function $\sigma_3$ from the statement of Theorem~\ref{th:main}.}\label{fig:sigma3}
\end{center}
\end{figure}
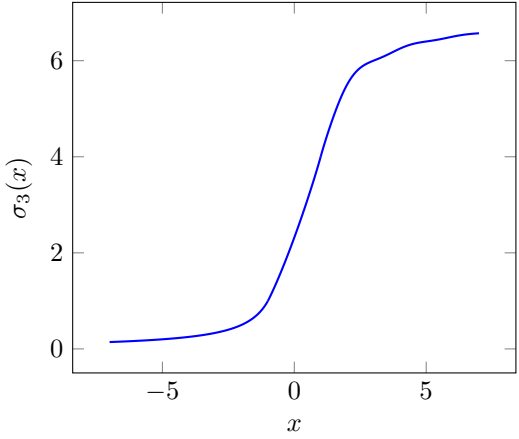

\begin{proof}[Proof of Theorem \ref{th:main}.] We consider the families $\mathcal A_1,\mathcal A_2,\mathcal A_3$ one by one. 

\medskip
\noindent
\emph{Proof for $\mathcal A_1$.} Given a function $f\in C([0,1]^d)$, we will construct the approximation $\widetilde f$ as a function piecewise constant on a partition of the cube $[0,1]$ into a grid of smaller cubes. Following the paper \cite{shen2020neural}, we specify these cubes by mapping them to integers with the help of function $\lfloor\cdot\rfloor$. Specifically, take some $M\in\mathbb N$ and let 
\begin{equation}\label{eq:gm}g_M(x_1,\ldots,x_d)=1+\sum_{k=1}^d(M+1)^{k-1}\lfloor Mx_k \rfloor.\end{equation}
The function $g_M$ is integer-valued and constant on the cubes $I_{M,\mathbf m}=[\tfrac{m_1}{M},\tfrac{m_1+1}{M})\times\ldots \times [\tfrac{m_d}{M},\tfrac{m_d+1}{M})$ indexed by integer multi-indices $\mathbf m=(m_1,\ldots,m_d)\in \mathbb Z^d$. The cube $[0,1]^d$  overlaps with $(M+1)^d$ such cubes $I_{M,\mathbf m}$, namely those with $0\le m_k\le M$. Each of these cubes is mapped by $g_M$ to a unique integer in the range $[1,(M+1)^d].$

Consider the periodic function 
\begin{equation}\label{eq:phi}
\phi(x)=x-\lfloor x\rfloor,\quad \phi:\mathbb R\to [0,1).
\end{equation}
 We will now seek our approximation in the form 
\begin{equation}\label{eq:wfx}
\widetilde f(\mathbf x)=u(g_M(\mathbf x)), u(y)=(B-A)\phi(s\sigma_1(\tfrac{\alpha+\beta}{2}+wy))+A,
\end{equation}
where 
\begin{equation}\label{eq:AB}A=\min_{\mathbf x\in[0,1]^d}f(\mathbf x),\quad B=\max_{\mathbf x\in[0,1]^d}f(\mathbf x),\end{equation}
$\tfrac{\alpha+\beta}{2}$ is the center of the interval $(\alpha,\beta)$ where $\sigma_1$ is analytic and non-polynomial, and  $s$ and $w$ are some weights to be chosen shortly. Clearly, the computation defined by Eqs.~\eqref{eq:gm},\eqref{eq:phi},\eqref{eq:wfx} is representable by a neural network of a fixed size only depending on $d$ (as $O(d)$) and using activations from $\mathcal A_1$. 

Let $N=(M+1)^d.$
Using the uniform continuity of $f$ and choosing $M$ large so that the size of each cube $I_{M,\mathbf m}$ is arbitrarily small, we see that the superexpressiveness will be established if we show that for any $N$, any $\epsilon>0$, and any $\mathbf y\in [A,B]^N$ there exist some weights $s$ and $w$ such that 
\begin{equation}\label{eq:unyn}
|u(n)-y_n|<\epsilon\text{ for all } n=1,\ldots,N.
\end{equation}

Recall that a set of numbers $a_1,\ldots,a_N$ is called \emph{rationally independent} if they are linearly independent over the field $\mathbb Q$ (i.e., no equality $\sum_{n=1}^N \lambda_n a_n=0$ with rational coefficients $\lambda_n$ can hold unless  all $\lambda_n=0$). Our strategy will be:
\begin{enumerate}
\item to choose the weight $w$ so as to make the values $a_n=\sigma_1(\tfrac{\alpha+\beta}{2}+wn)$ with $n=1,\ldots,N$ rationally independent;
\item use the density of the irrational winding on the torus to find $s$ ensuring condition \eqref{eq:unyn}.
\end{enumerate}
  
\noindent
For step 1, we state the following lemma.

\begin{lemma}\label{lm:realan}
Let $\sigma$ be a real analytic function in an interval $(\alpha,\beta)$ with $\beta>\alpha$. Suppose that there is $N$ such that for all $w$ with sufficiently small absolute value, the values $(\sigma(\tfrac{\alpha+\beta}{2}+wn))_{n=1}^N$ are not rationally independent. Then $\sigma$ is a polynomial. 
\end{lemma}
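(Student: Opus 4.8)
The plan is to argue directly from the hypothesis. Suppose there is an $N$ such that for every $w$ with $|w|$ sufficiently small the tuple $(\sigma(c+wn))_{n=1}^N$ is rationally dependent, where $c=\tfrac{\alpha+\beta}{2}$; I want to conclude that $\sigma$ is a polynomial. Fix such an $N$ and choose $\delta>0$ small enough that: (i) $[c-N\delta,c+N\delta]\subset(\alpha,\beta)$; (ii) the Taylor series $\sigma(c+t)=\sum_{k\ge0}a_kt^k$ converges absolutely for $|t|\le N\delta$ (possible since $\sigma$ is real analytic at $c$); and (iii) the tuple is rationally dependent for every $w$ with $|w|\le\delta$. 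By clearing denominators, for each such $w$ there is an integer vector $\lambda=(\lambda_1,\dots,\lambda_N)\ne 0$ with $\sum_{n=1}^N\lambda_n\sigma(c+wn)=0$.

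The crucial step is a pigeonhole-plus-rigidity argument. The interval $[-\delta,\delta]$ is uncountable, while the set of nonzero integer vectors is countable, so some single $\lambda\ne0$ serves for an uncountable set $S\subset[-\delta,\delta]$ of values of $w$. Consider $F(w)=\sum_{n=1}^N\lambda_n\sigma(c+wn)$. It is real analytic on the open interval $I=\big(-\tfrac{\beta-\alpha}{2N},\tfrac{\beta-\alpha}{2N}\big)$ (for $w\in I$ and $n\le N$ one has $c+wn\in(\alpha,\beta)$), and by (i) we have $[-\delta,\delta]\subset I$. An uncountable subset of $[-\delta,\delta]$ has an accumulation point, which lies in the open interval $I$, so by the identity theorem $F\equiv0$ on $I$; in particular $F(w)=0$ for all $w$ near $0$.

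Next I substitute the Taylor expansion of $\sigma$ at $c$ and rearrange (legitimately, by the absolute convergence (ii)):
$$0=F(w)=\sum_{n=1}^N\lambda_n\sum_{k\ge0}a_k(wn)^k=\sum_{k\ge0}a_k\Big(\sum_{n=1}^N\lambda_n n^k\Big)w^k$$
for all small $w$. Uniqueness of power-series coefficients gives $a_k\sum_{n=1}^N\lambda_n n^k=0$ for every $k\ge0$. If $\sigma$ were not a polynomial, infinitely many $a_k$ would be nonzero, hence $\sum_{n=1}^N\lambda_n n^k=0$ for infinitely many integers $k$. Picking any $N$ of them, say $k_1<\dots<k_N$, yields the homogeneous system $(n^{k_j})_{1\le j,n\le N}\,\lambda=0$ with a nonsingular generalized Vandermonde matrix (distinct positive nodes $1,\dots,N$), forcing $\lambda=0$ — a contradiction. (Alternatively: $k\mapsto\sum_{n=1}^N\lambda_n n^k$ is a nontrivial exponential sum with distinct real frequencies $\ln 1,\dots,\ln N$, so it has only finitely many real zeros.) Therefore $\sigma$ is a polynomial.

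The parts involving the choice of $\delta$, the absolute convergence, and the Vandermonde/exponential-sum nonsingularity are routine. The one genuinely load-bearing idea is the second paragraph: combining the countable-versus-uncountable pigeonhole with the identity theorem promotes ``for every small $w$, there is some rational dependence'' to ``there is one fixed rational dependence valid for all small $w$'', and this is what makes the power-series comparison possible. I expect that step — rather than any computation — to be the crux.
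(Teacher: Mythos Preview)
Your proof is correct and essentially identical to the paper's: both use the countable-versus-uncountable pigeonhole to fix a single nonzero $\lambda$, the identity theorem for real analytic functions to upgrade vanishing on an uncountable set to $F\equiv 0$ on a neighborhood, and the Taylor expansion at $w=0$ to obtain $\sum_{n}\lambda_n n^k=0$ for infinitely many $k$. The only cosmetic difference is that the paper derives $\lambda=0$ by the terse remark ``letting $m\to\infty$'' (the largest base $N$ dominates, so $\lambda_N=0$, then recurse) rather than invoking a generalized Vandermonde determinant, but this is the same idea.
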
 
\begin{proof}
For fixed coefficients $\bm\lambda=(\lambda_1,\ldots,\lambda_N)$, the function $\sigma_{\bm\lambda}(w)=\sum_{n=1}^N {\lambda}_n \sigma(\tfrac{\alpha+\beta}{2}+nw)$ is real analytic for $w\in U_N=(-\tfrac{\beta-\alpha}{2N},\tfrac{\beta-\alpha}{2N})$. Since there are only countably many $\bm\lambda\in\mathbb Q^N$, we see that under hypothesis of the lemma there is some ${\bm\lambda}$ such that $\sigma_{\bm\lambda}$ vanishes on an uncountable subset of $U_N$. Then, by analyticity, $\sigma_{\bm\lambda}\equiv 0$ on $U_N.$ Expanding this $\sigma_{\bm\lambda}$ into the Taylor series at $w=0$, we get the identity $\sum_{n=1}^N {\lambda}_n n^m=0$ for each $m$ such that $\tfrac{d^m \sigma}{dw^m}(\tfrac{\alpha+\beta}{2})\ne 0$. If there are infinitely many such $m$, then all ${\lambda}_n=0$ (by letting $m\to\infty$). It follows that if ${\bm\lambda}$ is nonzero, then there are only finitely many $m$'s such that $\tfrac{d^m \sigma}{dw^m}(\tfrac{\alpha+\beta}{2})\ne 0$, i.e. $\sigma$ is a polynomial.
\end{proof}

Applying Lemma \ref{lm:realan} to $\sigma=\sigma_1$, we see that for any $N$ there is $w$ such that the values $a_n=\sigma_1(\tfrac{\alpha+\beta}{2}+wn)$ with $n=1,\ldots,N$ are rationally independent.

For step 2, we use the well-known fact that an irrational winding on the torus is dense: 
\begin{lemma}\label{lm:irrwind}
Let $a_1,\ldots,a_N$ be rationally independent real numbers. Then the set $Q_N=\{(\phi(sa_1),\ldots,\phi(sa_N)): s\in\mathbb R\}$ (where $\phi$ is defined in Eq.~\eqref{eq:phi}) is dense in $[0,1)^N.$
\end{lemma}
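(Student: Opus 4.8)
The plan is to recognize Lemma~\ref{lm:irrwind} as a form of Kronecker's theorem on the density of the irrational winding line, and to prove it through Pontryagin duality for the torus (with Weyl's equidistribution criterion as an alternative route). First I would identify $[0,1)^N$ with the torus $\mathbb T^N=\mathbb R^N/\mathbb Z^N$ via the coordinatewise map $\phi$; then $Q_N$ is exactly the image of the one-parameter subgroup $L=\{s\mathbf a+\mathbb Z^N:s\in\mathbb R\}$, where $\mathbf a=(a_1,\ldots,a_N)$. It suffices to prove that $L$ is dense in $\mathbb T^N$: any basic open box $\prod_i(c_i,d_i)$ with $0<c_i<d_i<1$ has open image in $\mathbb T^N$, and every point of $[0,1)^N$ admits arbitrarily small neighborhoods of this form after an arbitrarily small translation away from the faces $\{x_i=0\}$, so density of $L$ in $\mathbb T^N$ yields density of $Q_N$ in $[0,1)^N$ in the Euclidean sense.

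To prove $L$ is dense in $\mathbb T^N$, I would consider its closure $H=\overline L$, which is a closed subgroup of the compact abelian group $\mathbb T^N$. If $H\ne\mathbb T^N$, then by Pontryagin duality there exists a nontrivial character $\chi(\mathbf x)=e^{2\pi i\langle\mathbf k,\mathbf x\rangle}$ with $\mathbf k\in\mathbb Z^N\setminus\{0\}$ that is identically $1$ on $H$, hence on $L$. Then $e^{2\pi i s\langle\mathbf k,\mathbf a\rangle}=1$ for all $s\in\mathbb R$, which forces $\langle\mathbf k,\mathbf a\rangle=\sum_n k_na_n=0$ — a nontrivial rational (indeed integer) linear relation among $a_1,\ldots,a_N$, contradicting their rational independence. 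Hence $H=\mathbb T^N$, so $L$ and therefore $Q_N$ are dense. Alternatively, and avoiding any appeal to duality, one can invoke Weyl's criterion directly: the curve $s\mapsto s\mathbf a\bmod\mathbb Z^N$ is equidistributed in $\mathbb T^N$ iff for every nonzero $\mathbf k\in\mathbb Z^N$ the average $\frac1T\int_0^T e^{2\pi i s\langle\mathbf k,\mathbf a\rangle}\,ds$ tends to $0$; since $\langle\mathbf k,\mathbf a\rangle\ne0$ by rational independence, this average equals $\frac{e^{2\pi iT\langle\mathbf k,\mathbf a\rangle}-1}{2\pi iT\langle\mathbf k,\mathbf a\rangle}=O(1/T)$, so equidistribution holds and in particular the orbit is dense.

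I do not expect a genuine obstacle here, as the statement is classical; the only point requiring mild care is the transition between density on $\mathbb T^N$ and density on $[0,1)^N$ with its Euclidean metric near the boundary faces, which is dispatched by the elementary translation argument above. In fact, in the eventual application to \eqref{eq:unyn} even this is immaterial, since the targets $y_n$ may be nudged slightly into the interior of $[A,B]$ at negligible cost before applying the lemma.
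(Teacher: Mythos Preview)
Your argument is correct and follows the classical slick route to Kronecker's theorem: the closure of the one-parameter subgroup $\{s\mathbf a\bmod\mathbb Z^N:s\in\mathbb R\}$ is a closed subgroup of $\mathbb T^N$, and any proper closed subgroup is annihilated by a nontrivial character $\mathbf k\in\mathbb Z^N\setminus\{0\}$, forcing $\sum_n k_na_n=0$ in contradiction with rational independence. The Weyl alternative is equally valid. Your handling of the passage between the torus topology and the Euclidean subspace topology on $[0,1)^N$ is fine (and in fact the two notions of density coincide, since any Euclidean-open subset of $[0,1)^N$ meets the interior $(0,1)^N$, where the topologies agree).

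The paper, however, takes a genuinely different and more elementary path: it argues by induction on $N$ using only the pigeonhole principle. At the inductive step one restricts to the discrete parameter values $s=m/a_N$, $m\in\mathbb Z$, so that the last coordinate vanishes and the points $\mathbf b_m$ lie in the face $[0,1)^{N-1}$; these points are shown to be distinct, hence by compactness some $\mathbf b_{m_2}$ lies within $\epsilon$ of the origin. Its first $N-1$ components are again rationally independent, so the inductive hypothesis applied to this new direction vector gives density in the face, and a final shift handles the remaining coordinate. Your proof is shorter and conceptually cleaner but invokes duality (or Fourier analysis) as a black box. The paper's proof is self-contained and, crucially for the paper's own purposes, essentially \emph{constructive}: the Discussion section explicitly relies on the fact that, given infinite precision, one can in principle follow the pigeonhole construction to \emph{compute} an $s$ with $(\phi(sa_n))_{n=1}^N$ close to a prescribed target. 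Your duality argument yields existence only and would not support that remark.
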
 
For completeness, we provide a proof in Appendix~\ref{sec:proofirrwind}. 

Lemma \eqref{lm:irrwind} implies that for any $\mathbf y\in [A,B]^N$, the point $\tfrac{\mathbf y-A}{B-A}\in [0,1]^N$ can be approximated by vectors $(\phi(sa_n))_{n=1}^N$. This implies condition \eqref{eq:unyn}, thus finishing the proof for $\mathcal A_1.$ 

\medskip
\noindent
\emph{Proof for $\mathcal A_2$.} We will only give a proof for $d=1$; the claim then follows for all larger $d$ by Corollary \ref{corol:1}.

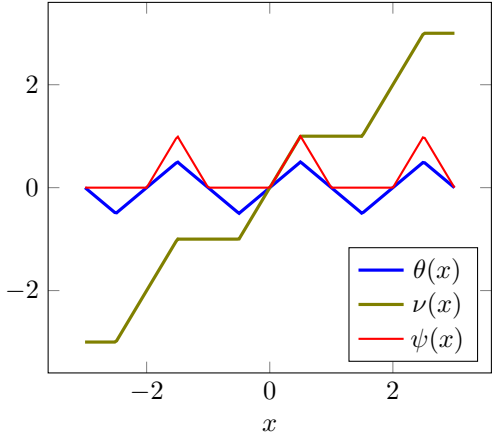
\begin{figure}
\begin{minipage}[b]{.49\textwidth}
\begin{center}
\begin{tikzpicture}
\begin{axis}[scale=0.7, xlabel=$x$,
legend pos=south east
]

\pgfplotsset{
/pgf/declare function={
theta(\x)=1/pi*rad(asin(sin(deg(pi*\x))));
}
}

\pgfplotsset{
/pgf/declare function={
nu(\x)=\x+theta(\x);
}
}

\pgfplotsset{
/pgf/declare function={
psi(\x)=(nu(theta(\x)-0.5)+1);
}
}

\addplot[domain=-3:3, 
         samples=200, 
         color=blue, 
         very thick
         ]{theta(x)};
\addlegendentry{$\theta(x)$}

\addplot[domain=-3:3, 
         samples=200, 
         color=red!50!green!,
         very thick
         ]{nu(x)};
\addlegendentry{$\nu(x)$}
         
\addplot[domain=-3:3, 
         samples=200, 
         color=red,
         thick]{psi(x)};
\addlegendentry{$\psi(x)$}         
\end{axis}
\end{tikzpicture}
\caption{The functions $\theta, \nu, \psi$ from the proof of Theorem~\ref{th:main}.}\label{fig:theta}
\end{center}
\end{minipage}
\end{figure}

Consider the piecewise linear periodic function 
$$\theta(x)=\tfrac{1}{\pi}\arcsin(\sin \pi x)$$ 
and the related functions 
\begin{align*}
\nu(x)={}&x+\theta(x),\\
\psi(x)={}&\nu(\theta(x)-\tfrac{1}{2})+1
\end{align*}
(see Fig.~\ref{fig:theta}).

We would like to extend the previous proof for $\mathcal A_1$ to the present case of $\mathcal A_2$ using the function $\nu$ as a substitute for $\lfloor\cdot\rfloor,$ since $\nu$ is constant on the intervals $[k-\tfrac{1}{2},k+\tfrac{1}{2}]$ with odd integer $k$.  However, in contrast to the function $\lfloor\cdot\rfloor$, the function $\nu$ is continuous and cannot map the whole segment $[0,1]$ to a finite set of values, which was crucial in the proof for $\mathcal A_1$. For this reason, we use a partition of unity and represent the approximated function $f\in C([0,1])$ as a sum of four functions supported on a grid of disjoint small segments. Specifically, let again $M$ be a large integer determining the scale of our partition of unity. We define this partition by 
\begin{equation}\label{eq:psiq}
1\equiv\sum_{q=-1}^2\psi_q(x),\quad \psi_q(x)=\psi(Mx-\tfrac{q}{2}),\quad x\in\mathbb R,
\end{equation}
and the respective decomposition of the function $f$ by
\begin{equation}\label{eq:fsumq}
f=\sum_{q=-1}^2f_{q},\quad f_{q}=f\psi_{q}.
\end{equation}
For a fixed $q$, the function $\psi_{q}$, and hence also $f_{q}$, vanish outside of the union of $N=\tfrac{M}{2}+O(1)$ disjoint segments $J_{q, p}=[\tfrac{4p-2+q}{2M}, \tfrac{4p+q}{2M}],p=1,\ldots,N,$ overlapping with the segment $[0,1].$ Denote this union by $ J_{q}.$

We approximate each function $f_{q}$ by a function $\widetilde f_{q}$ using an analog of the representation \eqref{eq:gm},\eqref{eq:phi},\eqref{eq:wfx}:
\begin{align}
G_{q}(x)={}&\nu(Mx-\tfrac{q}{2}+\tfrac{1}{2}),\label{eq:Gq}\\
v_{ q}(x)={}&(2\max_{x\in[0,1]}|f(x)|)\theta(s\sin(wG_{q}(x))),\label{eq:vq}\\
\widetilde f_{q}(x)={}&v_{q}(x)\psi_{q}(x),\label{eq:wfqx0}\\
\widetilde f={}&\sum_{q=-1}^2 \widetilde f_q.\label{eq:wffq}
\end{align} 

The function $G_{q}$ in Eq.~\eqref{eq:Gq} is constant and equal to $2p-1$ on each segment $J_{q, p}$. In particular, different segments $J_{q, p}$ overlapping with the segment $[0,1]$ are mapped by $G_{q}$ to different integers in the interval $[1,M+1].$

The function $v_{q}$ in Eq.~\eqref{eq:vq} is the analog of the expression for $\widetilde f$ given in Eq.~\eqref{eq:wfx}. Like $G_q$, the function $v_q$ is constant on each interval $J_{q,p}$. By Lemma \ref{lm:realan}, the values $(\sin (wm))_{m=1}^{M+1}$ are rationally independent for a suitable $w.$ We can then use again the density of irrational winding on the torus (Lemma~\ref{lm:irrwind}) to find $s$ such that for each $p$ the value $v_q|_{J_{q,p}}$ is arbitrarily close to the value of $f$ at the center $x_{q,p}=\tfrac{4p-1+q}{2M}$ of the interval $J_{q,p}$. Indeed, $\theta$ is a continuous periodic (period--2) function with $\max_x \theta(x)=-\min_x\theta(x)=\tfrac{1}{2}$.  For each $p=1,\ldots,N,$ we can first find $z_{q,p}\in\mathbb R/(2\mathbb Z)$ such that $(2\max_{x\in[0,1]}|f(x)|)\theta(z_{p,q})=f(x_{q,p}),$ and then, by Lemma \ref{lm:irrwind}, find $s$ such that $s\sin(w(2p-1))$ is arbitrarily close to $z_{q,p}$ on the circle $\mathbb R/(2\mathbb Z)$ for each $p=1,\ldots,N.$ As a result, we see that the function $v_{q}$ can approximate the function $f$ on the whole set $J_{q}$. As before, to achieve  an arbitrarily small error, we need to first choose $M$ large enough and then choose suitable $w$ and $s$. (By the uniform continuity of $f$, one can use here the same $w$ and $s$ for all $q\in\{-1,0,1,2\}$.)

At the same time, it makes no difference how the function $v_{q}$ behaves on the complementary set $[0,1]\setminus J_{q}$, since $\psi_{q}$ vanishes on this set. It follows that $\widetilde f_{q}$ defined by Eq.~\eqref{eq:wfqx0} can approximate $f_q$ defined by Eq.~\eqref{eq:fsumq} with arbitrarily small error on the whole segment $[0,1].$ 
Then, the function $\widetilde f$ given by Eq.~\eqref{eq:wffq} can approximate $f$ uniformly on $[0,1]$ with any accuracy. 

The computation \eqref{eq:Gq}-\eqref{eq:wffq} is directly representable by a fixed size neural network with activations $\{\sin,\arcsin\}$, except for multiplication step \eqref{eq:wfqx0}. Multiplication, however, can be implemented with any accuracy by a fixed-size subnetwork:

\begin{lemma}[Approximate multiplier]\label{lm:mult}
Suppose that an activation function $\sigma$ has a point $x_0$ where the second derivative $\tfrac{d^2\sigma}{dx^2}(x_0)$ exists and is nonzero. Then there is a fixed two-input network architecture with this activation that allows to implement the approximate multiplication of the inputs, $x,y\mapsto xy,$ with any accuracy  uniformly on any bounded set of inputs $x,y$, by suitably adjusting the weights.  
\end{lemma}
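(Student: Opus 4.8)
\emph{Proof sketch.} The plan is to reduce multiplication to the approximation of a single squaring map, and then to build the squaring map out of a symmetric second-order finite difference of $\sigma$ at $x_0$. I would start from the polarization identity $xy=\tfrac14\big((x+y)^2-(x-y)^2\big)$, so that it suffices, given a bound $T>0$ with $|x|,|y|\le T$ and a target accuracy $\epsilon>0$, to produce a fixed two-input $\sigma$-subnetwork approximating $t\mapsto t^2$ uniformly on $[-T,T]$ (applied to the linear forms $x+y,x-y\in[-2T,2T]$, this yields the multiplier after a linear combination).

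For the squaring gadget, recall that existence of $\tfrac{d^2\sigma}{dx^2}(x_0)$ forces $\sigma$ to be differentiable in a neighborhood of $x_0$ and gives the second-order Taylor expansion with Peano remainder,
$$\sigma(x_0+u)=\sigma(x_0)+\sigma'(x_0)u+\tfrac12\sigma''(x_0)u^2+\rho(u),\qquad \rho(u)=o(u^2)\ \ (u\to0).$$
Putting $u=\pm th$ and adding, the first-order terms cancel, so that
$$\frac{\sigma(x_0+th)+\sigma(x_0-th)-2\sigma(x_0)}{\sigma''(x_0)\,h^2}=t^2+\frac{\rho(th)+\rho(-th)}{\sigma''(x_0)\,h^2}.$$
Writing $\eta(\delta)=\sup_{0<|u|\le\delta}|\rho(u)|/u^2$, which tends to $0$ as $\delta\to0^+$, the last term is bounded in absolute value by $2T^2\eta(Th)/|\sigma''(x_0)|$ for every $|t|\le T$; hence the left-hand side converges to $t^2$ uniformly on $[-T,T]$ as $h\to0^+$. (If $\sigma$ has a bounded domain one also takes $h$ small enough that $x_0\pm th$ stays in that domain, which is possible since $x_0$ is an interior point.) Choosing $h$ small enough then gives the prescribed accuracy.

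It remains to observe that the left-hand side above is realizable by a fixed-size subnetwork: $\sigma(x_0+th)$ and $\sigma(x_0-th)$ are the outputs of two hidden neurons with activation $\sigma$, input weight $\pm h$ and bias $x_0$ (here $t$ is the incoming linear form), the constant $\sigma(x_0)$ is the output of a $\sigma$-neuron with zero input weight (or may simply be absorbed into the bias of the downstream neuron), and the division by $\sigma''(x_0)h^2$ together with the polarization combination is just a linear readout, which folds into the weights and bias of whatever downstream neuron consumes the product. Thus the whole construction uses only $\sigma$, has a constant number of neurons and connections independent of $T$ and $\epsilon$, and only its weights change with $T$ and $\epsilon$.

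The main obstacle is the uniformity of the remainder estimate: since $\sigma''$ is assumed to exist only at the single point $x_0$, there is no uniform control of the second derivative on a neighborhood, so the argument must go through the pointwise Taylor formula with Peano remainder and the elementary passage $|\rho(u)|\le u^2\eta(|u|)$, which upgrades a pointwise $o(u^2)$ estimate to one uniform in $t$ once $u=th$ with $|t|\le T$. Everything else — the polarization identity, the cancellation of the linear term in the symmetric difference, and the bookkeeping that linear readouts compose with downstream neurons — is routine.
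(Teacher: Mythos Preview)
Your proof is correct and follows essentially the same route as the paper: reduce multiplication to squaring via the polarization identity, then approximate $t\mapsto t^2$ by the symmetric second-order divided difference $(\sigma''(x_0))^{-1}h^{-2}\bigl(\sigma(x_0+th)+\sigma(x_0-th)-2\sigma(x_0)\bigr)$ as $h\to0$. Your explicit handling of the uniformity of the Peano remainder (the passage from a pointwise $o(u^2)$ to an estimate uniform in $|t|\le T$ via $\eta(\delta)=\sup_{0<|u|\le\delta}|\rho(u)|/u^2$) is in fact more careful than the paper's one-line version, which simply asserts the existence of a suitable $\delta$.
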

\begin{proof}
First note that we can implement the approximate squaring $x\mapsto x^2$ with any accuracy using just a network with three neurons. Indeed, by the assumption on $\tfrac{d^2\sigma}{dx^2}$, for any $C,\epsilon>0$ we can choose $\delta$ such that 
$$|(\tfrac{d^2\sigma}{dx^2}(x_0))^{-1}\tfrac{1}{\delta^2}(\sigma(x_0+x\delta)+\sigma(x_0-x\delta)-2\sigma(x_0))-x^2|<\epsilon$$
for all $|x|<C$. Then, using
the polarization identity $xy=\tfrac{1}{2}((x+y)^2+(x-y)^2)$, we see that the desired approximate multiplier can be implemented using a fixed 6-neuron architecture. 
\end{proof}

We can apply this lemma with $\sigma=\sin$ and any $x_0\ne \pi k,k\in\mathbb Z$, thus completing the proof for $\mathcal A_2$.

\medskip
\noindent
\emph{Proof for $\mathcal A_3$.}
We reduce this case to the previous one, $\mathcal A_2.$ First observe that we can approximate the function $\arcsin$ by a fixed-size $\sigma_3$-network. 
\begin{lemma}\label{lm:antider}
A superexpressive family of continuous activations remains superexpressive if some activations are replaced by their antiderivatives.
\end{lemma}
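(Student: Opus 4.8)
The plan is to reduce the claim to the case where a single activation $\sigma\in\mathcal A$ is replaced by a single antiderivative $\Sigma$ of it (so $\Sigma'=\sigma$), and then to simulate each $\sigma$-neuron by a difference quotient of $\Sigma$. The reduction to one replacement is immediate: $\Sigma$ is again a continuous activation (in fact $C^1$), so a replacement of several activations is obtained by performing single replacements one after another, the family remaining superexpressive at each stage. The analytic fact underlying the single-replacement step is that a neuron computing $\sigma(\ell)$, with $\ell$ its affine pre-activation, is well approximated by a linear combination of two $\Sigma$-neurons:
\[
\sigma(\ell)\ \approx\ \frac{\Sigma(\ell+\delta)-\Sigma(\ell)}{\delta}\ =\ \frac{1}{\delta}\int_\ell^{\ell+\delta}\sigma(t)\,dt,
\]
and, since $\sigma$ is continuous, the right-hand side converges to $\sigma(\ell)$ uniformly for $\ell$ in any fixed compact set as $\delta\to0$, by the uniform continuity of $\sigma$ on a slightly enlarged compact.

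Concretely, I would take a superexpressive architecture for $\mathcal A$ and replace in it every neuron that uses $\sigma$ by a pair of $\Sigma$-neurons with the same incoming weights, one carrying the original bias $h$ and the other the bias $h+\delta$; wherever the original $\sigma$-neuron fed a downstream neuron with weight $w$, the two new neurons feed that neuron with weights $w/\delta$ and $-w/\delta$ (one and the same pair of new neurons serving all downstream targets of the old neuron). The result is again a fixed finite architecture, of size at most twice the original, using only activations from the modified family, with $\delta$ and the rescaled weights among the adjustable parameters.

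It then remains to check that this modified network can still approximate an arbitrary $f\in C([0,1]^d)$ arbitrarily well, and this error analysis is where I expect the only real difficulty, precisely because the activations are merely continuous (not Lipschitz), so errors cannot be pushed through the network by crude norm bounds. Fix $f$ and $\epsilon>0$ and fix weights making the original network $\epsilon$-close to $f$ on $[0,1]^d$. With these weights frozen and the input restricted to the compact cube, every pre-activation in the original network ranges over a fixed bounded set. Ordering the neurons topologically, I would show by induction along the DAG that, as $\delta\to0$, the tuple of all neuron outputs of the modified network converges uniformly on $[0,1]^d$ to that of the original network: at an unmodified neuron this follows from uniform continuity of its activation on the relevant compact, once one notes (inductive hypothesis) that its pre-activation stays in a fixed compact and converges uniformly; at a replaced neuron one additionally uses the uniform difference-quotient estimate above. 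Since the network output is a fixed affine function of these neuron outputs, the modified network converges uniformly to the original one, hence is $2\epsilon$-close to $f$ for $\delta$ small, and since $\epsilon$ was arbitrary the modified family is superexpressive. (When an activation is defined only on a subinterval one takes $\Sigma$ to be an antiderivative on that interval and keeps $\ell+\delta$ inside it; for $\sigma_3$, which is defined on all of $\mathbb R$, this point does not arise.)
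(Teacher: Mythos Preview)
Your proposal is correct and uses exactly the same idea as the paper: replace each $\sigma$-neuron by the difference quotient $\tfrac{1}{\delta}(\Sigma(\ell+\delta)-\Sigma(\ell))$ of the antiderivative $\Sigma$, which approximates $\sigma(\ell)$ uniformly on compacts by uniform continuity of $\sigma$. The paper's proof is a one-line statement of this difference-quotient approximation; your additional layer-by-layer induction to propagate the error through the (merely continuous) network is a welcome elaboration that the paper leaves implicit.
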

\begin{proof}
The claim follows since any continuous activation $\sigma$ can be approximated uniformly on compact sets by expressions $\tfrac{1}{\delta}(\sigma^{(-1)}(x+\delta)-\sigma^{(-1)}(x))$, where $\sigma^{(-1)}=\int \sigma$.
\end{proof}
Our activation $\sigma_3$ is the antiderivative of  $\tfrac{1}{\pi}\arcsin x+\tfrac{3}{2}$ on the interval $[-1,1]$. 

Observe next that on the interval $[1,\infty),$ we can express the function $\sin x$ by multiplying $\sigma_3(x)$ by some polynomials in $x$ and subtracting constants. By Lemma \ref{lm:mult}, these operations can be implemented with any accuracy by a fixed size $\sigma_3$-network. By periodicity of $\sin,$ we can then approximate it on any bounded interval.

We conclude that we can approximate any $\mathcal A_2$-network with any accuracy by a $\sigma_3$-network that has the same size up to a constant factor.  

It is an elementary computation that $\sigma_3$ is $ C^1(\mathbb R)$, bounded and monotone increasing. This completes the proof of the theorem.
\end{proof}

\section{Absence of superexpressiveness for standard activations}\label{sec:absence}
In this section we show that most practically used activation functions (those not involving $\sin x$ or $\cos x$) are not superexpressive. This is an easy consequence of Khovanskii's bounds on the number of zeros of elementary functions \cite{khovanskii}. We remark that these bounds have been used previously to bound expressiveness of neural networks in terms of VC dimension \cite{karpinski1997polynomial} or Betti  numbers of level sets \cite{bianchini2014complexity}. 

First recall the standard definition of Pfaffian functions (see e.g. \cite{khovanskii, zell1999betti, gabrielov2004complexity}). 
A \emph{Pfaffian chain} is a sequence $f_1,\ldots,f_l$ of real analytic functions defined on a common connected domain $U\subset\mathbb R^d$ and such that the equations
$$\tfrac{\partial f_i}{\partial x_j}(\mathbf x)=P_{ij}(\mathbf x,f_1(\mathbf x),\ldots,f_i(\mathbf x)), 1\le i\le l,\; 1\le j\le d$$
hold in $U$ for some polynomials $P_{ij}.$  
 A \emph{Pfaffian function} in the chain $(f_1,\ldots,f_l)$ is a function on $U$ that can be expressed as a polynomial $P$ in the variables $(\mathbf x, f_1(\mathbf x),\ldots,f_l(\mathbf x))$. 
\emph{Complexity} of the Pfaffian function $f$ is the triplet $(l,\alpha,\beta)$ consisting of the length $l$ of the chain, the maximum degree $\alpha$ of the polynomials $P_{ij}$, and the degree $\beta$ of the polynomial $P.$

The importance of Pfaffian functions stems from the fact that they include all elementary functions when considered on suitable domains. This is shown by first checking that the simplest elementary functions are Pfaffian, and then by checking that arithmetic operations and compositions of Pfaffian functions produce again Pfaffian functions. We refer again to \cite{khovanskii, zell1999betti, gabrielov2004complexity} for details.
 
\begin{prop} \hfill
\begin{enumerate}
\item (Elementary examples) The following functions are Pfaffian: polynomials on $U=\mathbb R^d,$  $e^x$ on $\mathbb R$,  $\ln x$ on $\mathbb R_+$,  $\arcsin x$ on $(-1,1).$
 The function $\sin x$ is Pfaffian on any bounded interval $(A,B),$ with complexity depending on $B-A$, but $\sin x$ is \emph{not} Pfaffian on $\mathbb R$. 
\item (Operations with Pfaffian functions) Sums and products of Pfaffian functions $f,g$ with a common domain $U$ are Pfaffian. If the domain of a Pfaffian function $f$ includes the range of a Pfaffian function $g$, then the composition $f\circ g$ is Pfaffian on the domain of $g$. The complexity of the resulting functions $f+g,fg,f\circ g$ is determined by the complexity of the functions $f,g$. 
\end{enumerate}
\end{prop}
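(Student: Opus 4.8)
The plan is to prove the two parts separately: part 1 by exhibiting an explicit short Pfaffian chain for each listed function, and part 2 by a chain‑concatenation argument together with routine degree bookkeeping. The only genuinely delicate point is the dichotomy for $\sin x$, and within it the non‑Pfaffianity on $\mathbb R$, for which the one nontrivial external input is Khovanskii's bound \cite{khovanskii} on the number of zeros of a univariate Pfaffian function in terms of its complexity.

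For part 1, I would record the following chains. Polynomials on $\mathbb R^d$ are Pfaffian with the empty chain ($l=0$), being their own defining polynomial. For $e^x$ on $\mathbb R$: the length‑$1$ chain $f_1=e^x$ with $f_1'=f_1$, and $e^x=f_1$. For $\ln x$ on $\mathbb R_+$ one cannot use $\ln x$ alone, since $(\ln x)'=1/x$ is not a polynomial in $(x,\ln x)$; instead use the length‑$2$ chain $f_1=1/x$, $f_2=\ln x$ with $f_1'=-f_1^2$, $f_2'=f_1$, and $\ln x=f_2$. For $\arcsin x$ on $(-1,1)$: the length‑$2$ chain $f_1=(1-x^2)^{-1/2}$, $f_2=\arcsin x$ with $f_1'=x f_1^3$, $f_2'=f_1$, and $\arcsin x=f_2$. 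Each identity is one differentiation, and the complexity triples are read off directly.

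The case of $\sin x$ is where I would spend the most care. To put it on a bounded interval $(A,B)$, let $c=\tfrac{A+B}{2}$ and pick an integer $m$ with $m>\tfrac{B-A}{2\pi}$, so that $\tfrac{x-c}{2m}$ stays in $(-\tfrac{\pi}{2},\tfrac{\pi}{2})$ where $\tan$ is real analytic. Take the length‑$2$ chain $f_1=\tan\tfrac{x-c}{2m}$, $f_2=(1+f_1^2)^{-1}$; then $f_1'=\tfrac{1}{2m}(1+f_1^2)$ and $f_2'=-\tfrac{1}{m}f_1 f_2$ are polynomials of degree $2$. The half‑angle formulas give $\sin\tfrac{x-c}{m}=2f_1 f_2$ and $\cos\tfrac{x-c}{m}=(1-f_1^2)f_2$ as polynomials in $(f_1,f_2)$; the multiple‑angle (Chebyshev) expansions then express $\sin(x-c),\cos(x-c)$ as polynomials of degree $O(m)$ in $\sin\tfrac{x-c}{m},\cos\tfrac{x-c}{m}$, hence of degree $O(m)=O(B-A)$ in $(f_1,f_2)$, and the addition formula $\sin x=\sin(x-c)\cos c+\cos(x-c)\sin c$ finishes it; so $\sin x$ is Pfaffian on $(A,B)$ with complexity growing with $B-A$. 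For the impossibility on all of $\mathbb R$: Khovanskii's theorem bounds the number of zeros of a univariate Pfaffian function of complexity $(l,\alpha,\beta)$ by an explicit finite quantity; since $\sin x$ has infinitely many (non‑degenerate) zeros on $\mathbb R$, it cannot be Pfaffian there with any finite complexity. Checking that the half‑angle/multiple‑angle manipulations really keep everything polynomial rather than merely rational — which is exactly why the auxiliary function $f_2$ is put into the chain — is the step I expect to be the main obstacle.

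For part 2, let $f=P(\mathbf x,f_1,\ldots,f_l)$ and $g=Q(\mathbf x,g_1,\ldots,g_m)$ be Pfaffian on a common domain $U$. First, the concatenation $(f_1,\ldots,f_l,g_1,\ldots,g_m)$ is again a Pfaffian chain on $U$: each $\partial_j g_i$ is a polynomial in $(\mathbf x,g_1,\ldots,g_i)$, hence a fortiori in the longer tuple, and the triangular dependence is preserved; thus $f+g$ and $fg$ are polynomials in this chain of degrees $\max(\beta_f,\beta_g)$ and $\beta_f+\beta_g$, hence Pfaffian of order $l+m$. For the composition $f\circ g$ with $\mathrm{range}(g)\subseteq\mathrm{dom}(f)$, write the (univariate, since $\mathrm{range}(g)\subset\mathbb R$) function as $f(y)=P(y,h_1(y),\ldots,h_k(y))$ with chain $(h_i)$, and set $\widetilde h_i(\mathbf x):=h_i(g(\mathbf x))$. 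By the chain rule $\partial_j\widetilde h_i=h_i'(g(\mathbf x))\,\partial_j g(\mathbf x)$, where $h_i'(g(\mathbf x))$ is, via the Pfaffian equation for $h_i$, a polynomial in $g(\mathbf x)=Q(\mathbf x,g_1,\ldots,g_m)$ and $\widetilde h_1,\ldots,\widetilde h_i$, while $\partial_j g=\partial_j Q(\mathbf x,g_1,\ldots,g_m)$ reduces, using the chain equations for the $g_i$, to a polynomial in $(\mathbf x,g_1,\ldots,g_m)$; so $(g_1,\ldots,g_m,\widetilde h_1,\ldots,\widetilde h_k)$ is a Pfaffian chain and $f\circ g(\mathbf x)=P(Q(\mathbf x,g_1,\ldots,g_m),\widetilde h_1,\ldots,\widetilde h_k)$ is a polynomial in it, with degrees tracked in terms of those of $f,g$. (Composing a multivariate Pfaffian $f$ with several Pfaffian functions on a common domain is the same after first concatenating their chains.) Combining parts 1 and 2 with the fact that every elementary function is built from the basic examples by finitely many such operations on suitable domains yields the proposition.
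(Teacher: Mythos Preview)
The paper does not give its own proof of this proposition; it merely states the result and defers to the references \cite{khovanskii, zell1999betti, gabrielov2004complexity} for details. Your proposal supplies a correct, self-contained argument that is exactly what those references contain: explicit triangular chains for the basic examples (including the two-step chains for $\ln$ and $\arcsin$), the standard $\tan$-substitution chain $(f_1,f_2)=(\tan\tfrac{x-c}{2m},(1+f_1^2)^{-1})$ combined with Chebyshev multiple-angle polynomials to realize $\sin$ on a bounded interval with complexity growing in $B-A$, Khovanskii's zero bound to rule out $\sin$ being Pfaffian on all of $\mathbb R$, and chain concatenation with the chain rule for sums, products, and compositions. All steps check out, including the point you flag as the main obstacle: the auxiliary $f_2=(1+f_1^2)^{-1}$ is precisely what keeps $\sin\tfrac{x-c}{m}=2f_1f_2$ and $\cos\tfrac{x-c}{m}=(1-f_1^2)f_2$ polynomial rather than rational in the chain.

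One small structural remark: your non-Pfaffianity argument for $\sin$ on $\mathbb R$ invokes Khovanskii's theorem, which in the paper appears \emph{after} this proposition (as Theorem~\ref{th:khovanskii}). This is a harmless forward reference, since the proposition is not used in proving that theorem, and the zeros of $\sin$ are all nondegenerate so the theorem applies directly without needing Proposition~\ref{prop:univpfaff}.
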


We state now the fundamental result on Pfaffian functions. We call a solution $\mathbf x\in\mathbb R^d$ of a system $f_1(\mathbf x)=\ldots=f_d(\mathbf x)=0$ \emph{nondegenerate} if the respective Jacobi matrix $\tfrac{\partial f_i}{\partial x_j}(\mathbf x)$ is nondegenerate.
 
\begin{theor}[\citealt{khovanskii}]\label{th:khovanskii} Let $f_1,\ldots,f_d$ be Pfaffian $d$-variable functions with a common Pfaffian chain on a connected domain $U$. Then the number of nondegenerate solutions of the system $f_1(\mathbf x)=\ldots=f_d(\mathbf x)=0$ is bounded by a finite number only depending on the complexities of the functions $f_1,\ldots,f_d.$  
\end{theor}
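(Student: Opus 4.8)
The plan is to follow Khovanskii's fewnomial method, establishing the bound by induction on the length $l$ of the Pfaffian chain. A nondegenerate solution of the system is by definition an isolated point of its zero set, so finiteness in itself is not the issue --- the content is the explicit bound depending only on the complexities (and on $d$), and this is what I would target from the outset. It is convenient to prove simultaneously a slightly stronger statement: a bound, in terms of the same data, on the number of connected components of any set cut out inside $U$ by $r\le d$ Pfaffian equations sharing a common chain; a nondegenerate zero of the full system $g_1=\cdots=g_d=0$ is then exactly an isolated, hence connected, component. The base case $l=0$ is classical: here the $g_i$ are ordinary polynomials of degree $\le\beta$, the number of isolated real solutions is at most the B\'ezout number $\beta^d$, and numbers of connected components of real algebraic sets admit analogous classical bounds.

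The engine of the inductive step is the \emph{generalized Rolle--Khovanskii inequality}: if $\Gamma$ is a one-dimensional Pfaffian curve (a connected component of a variety cut out by some of the equations) and $h$ is a Pfaffian function in the same chain, then the number of nondegenerate zeros of $h$ on $\Gamma$ is at most the number of connected components of $\Gamma$ plus the number of zeros on $\Gamma$ of the derivative $L_\Gamma h$ of $h$ along $\Gamma$. The point is that $L_\Gamma h$ is a Jacobian determinant assembled from $h$ and the equations defining $\Gamma$, so it is again a Pfaffian function in the \emph{same} chain, with complexity controlled --- the degree grows only by an amount bounded in terms of $\alpha,\beta,d$. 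Iterating this, and slicing the intermediate varieties by generic hyperplanes $x_j=c$ to drop the dimension one step at a time, one reduces counting the solutions of the $d$-equation system, and counting components of the auxiliary varieties, to the same questions for a chain of length $l-1$: the last chain function $f_l$ is promoted to an independent coordinate, so that the polynomials governing the reduced problem only reference $f_1,\ldots,f_{l-1}$ (the Pfaffian relations $\partial_j f_l=P_{lj}$ involve only the shorter chain), the price being one extra variable and some extra, purely polynomial, equations; the subtlety that $f_l$ is pinned to a single integral leaf rather than to a pointwise condition is exactly what Khovanskii's ``theory of separating solutions'' takes care of.

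A Morse-theoretic ingredient enters the component-counting: each connected component of a compact smooth Pfaffian hypersurface contains a critical point of a generic linear functional, and these critical points form the solution set of an augmented Pfaffian system --- more equations, but the same chain --- yielding a recursion that the Rolle inequality keeps from diverging (unbounded $U$ is handled separately, e.g.\ by intersecting with large balls). I expect the genuine difficulty to be precisely this bookkeeping: organizing the simultaneous induction on the tuple of chain length, number of equations, dimension, and degrees so that the estimates actually close, since every use of the Rolle inequality or of the Morse argument raises either a degree or the number of equations, and all these increases must be absorbed into the final bound. This delicate combinatorial core is the substance of \cite{khovanskii}; I would not reproduce it, and would refer the reader also to the expositions \cite{zell1999betti, gabrielov2004complexity}.
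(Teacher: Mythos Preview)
Your outline is correct and matches the paper's treatment. Note that the paper does not actually prove this theorem: it is quoted as a result of \cite{khovanskii}, and the paper only offers a two-sentence sketch --- a generalized Rolle's lemma reduces the count of common zeros of the $f_k$ to a count of common zeros of polynomials in more variables, which is then bounded by B\'ezout. Your proposal is a more elaborate version of exactly this idea (induction on chain length, Rolle--Khovanskii inequality, Morse-type component counting, B\'ezout base case), and like the paper you ultimately defer to \cite{khovanskii, zell1999betti, gabrielov2004complexity} for the full bookkeeping.
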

The idea of the proof is to use a generalized Rolle's lemma and bound the number of common zeros of the functions $f_k$ by the number of common zeros  of suitable polynomials (in a larger number of variables). The latter number can then be upper bounded using the classical B\'ezout theorem. It is possible to write the bound in Theorem \ref{th:khovanskii} explicitly, but we will not need that for our purposes. 

We will only use the univariate version of Theorem \ref{th:khovanskii}. In this case, it will also be easy to remove the inconvenient nondegeneracy condition in this theorem. (Note that this condition is essential in  general -- for example, if  $f_1(x_1,x_2)\equiv f_2(x_1,x_2)=x_1$, then the system $f_1=f_2=0$ has infinitely many degenerate solutions).
 
\begin{prop}\label{prop:univpfaff} Let $f$ be a univariate Pfaffian function on an open interval $I\subset\mathbb R.$ Then either $f\equiv 0$ on $I$, or the number of zeros of $f$ is bounded by a finite number only depending on the complexity of $f$. 
\begin{proof}
Suppose $f\not\equiv 0$. Then, by real analiticity of $f$, any zero $x_0$ of $f$ in $I$ is isolated, and we can write $f(x)=c(x-x_0)^k(1+o(1))$ as $x\to x_0$, with some $c\ne 0$ and $k\in\mathbb N.$ By Sard's theorem, there is a sequence $\epsilon_n\searrow 0$ such that the values $\pm\epsilon_n$ are not critical values of $f$. The functions $f\pm \epsilon_n$ are Pfaffian with the same complexity as $f$, and don't have degenerate zeros. For any zero $x_0$ of $f$, the two functions $f\pm\epsilon_n$ will have in total two nondegenerate zeros in a vicinity of $x_0,$ for any $\epsilon_n$ small enough. It follows that the total number of all nondegenerate zeros of the two functions $f\pm\epsilon_n$, for $\epsilon_n$ small enough, will be at least twice as large as the number or zeros of the function $f$ (or can be made arbitrarily large if $f$ has infinitely many zeros). Applying Theorem \ref{th:khovanskii} to the functions $f\pm\epsilon_n$, we obtain the desired conclusion on the zeros of $f$.
\end{proof}
\end{prop}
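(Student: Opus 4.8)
The plan is to deduce the statement from Khovanskii's theorem (Theorem~\ref{th:khovanskii}) in the univariate case $d=1$, where the system reduces to the single equation $f(x)=0$ and the nondegeneracy condition becomes $f'(x)\neq 0$. The obstacle is precisely the gap between the two notions of ``zero'': Khovanskii's bound counts only the \emph{nondegenerate} (simple) zeros, whereas I must bound \emph{all} zeros of $f$, including those of higher multiplicity. I would bridge this gap by a two-sided perturbation argument.

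First I would record that a Pfaffian function is real analytic, so if $f\not\equiv 0$ then every zero is isolated and of finite multiplicity: near a zero $x_0$ one has $f(x)=c(x-x_0)^k(1+o(1))$ with $c\neq 0$ and $k\in\mathbb N$. Next, by Sard's theorem the critical values of $f$ form a set of measure zero, so I can pick a sequence $\epsilon_n\searrow 0$ for which both $+\epsilon_n$ and $-\epsilon_n$ are regular values of $f$. The shifted functions $f-\epsilon_n$ and $f+\epsilon_n$ use the same Pfaffian chain as $f$ and differ from $f$ only by a constant added to the defining polynomial $P$, hence have the same complexity; and since $\pm\epsilon_n$ are regular values, all of their zeros are nondegenerate.

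The heart of the argument is a local splitting count. Near each zero $x_0$ of $f$, for $\epsilon$ small enough the two equations $f=\epsilon$ and $f=-\epsilon$ have \emph{together} exactly two solutions in a small neighborhood of $x_0$: a short case analysis on the local model $c(x-x_0)^k$ gives this for $k$ odd (one solution on each side of $0$) and for $k$ even (two solutions on the side of $0$ matching the sign of $c$, none on the other). Thus each zero of $f$ contributes at least two nondegenerate zeros to the combined zero set of $f-\epsilon_n$ and $f+\epsilon_n$.

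Finally I would assemble the bound. Suppose $f$ has at least $N$ zeros, and pick $N$ of them together with pairwise disjoint neighborhoods; choosing $\epsilon_n$ smaller than the finitely many local thresholds, the functions $f-\epsilon_n$ and $f+\epsilon_n$ acquire together at least $2N$ distinct nondegenerate zeros. Applying Theorem~\ref{th:khovanskii} to each of $f\pm\epsilon_n$ gives a bound $C$, depending only on the complexity of $f$, on the nondegenerate zeros of each, so $2N\le 2C$ and hence $N\le C$. As this holds for every finite subcollection, the total number of zeros of $f$ is at most $C$. The main obstacle, as noted, is the passage from nondegenerate to arbitrary zeros, which the two-sided shift resolves cleanly because each multiple zero of $f$ unfolds into exactly two simple zeros of $f-\epsilon$ and $f+\epsilon$ combined, regardless of its multiplicity.
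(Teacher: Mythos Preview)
Your proof is correct and follows essentially the same approach as the paper: perturb $f$ by regular values $\pm\epsilon_n$ chosen via Sard's theorem, observe that each zero of $f$ (regardless of multiplicity) splits into exactly two nondegenerate zeros of the pair $f-\epsilon_n,\,f+\epsilon_n$, and then apply Khovanskii's bound to those perturbed functions. Your write-up is in fact slightly more explicit than the paper's in spelling out the parity case analysis and the final counting inequality $2N\le 2C$.
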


Now we apply these results to standard activation functions.

\begin{defin} We say that an activation function $\sigma$ is \emph{piecewise Pfaffian} if its domain of definition can be represented as a union of finitely many open intervals $U_n$ and points $x_k$ in $\mathbb R$ so that $\sigma$ is Pfaffian on each $U_n$.  
\end{defin}

By discussion above, this definition covers most practically used activations, such as $\tanh x$, standard sigmoid $\sigma(x)=(1+e^{-x})^{-1},$ ReLU $\sigma(x)=\max(0,x),$ leaky ReLU $\sigma(x)=\max(ax,x),$ binary step function $\sigma(x)=\begin{cases}0,&x<0\\1,&x\ge 0\end{cases}$, Gaussian $\sigma(x)=e^{-x^2},$ softplus $\sigma(x)=\ln(1+e^x)$ \cite{glorot2011deep}, ELU $\sigma(x)=\begin{cases}a(e^x-1),&x<0\\ x,& x\ge 0\end{cases}$ \cite{clevert2015fast}, etc. Our main result in this section states that any finite collection of such activations is not superexpressive.

\begin{theor}\label{th:main2}
Let $\mathcal A$ be a family of finitely many piecewise Pfaffian activation functions. Then $\mathcal A$ is not superexpressive.
\end{theor}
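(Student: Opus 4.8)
The plan is to show that any function computed by a fixed $\sigma$-network (with $\sigma$ ranging over a finite piecewise Pfaffian family $\mathcal A$) has a uniformly bounded ``combinatorial complexity'', and in particular cannot approximate functions that oscillate too much — which continuous functions in general do. The cleanest route is to work in dimension $d=1$ and exhibit a single target function (say $f(x)=\sin(Kx)$ on $[0,1]$ for $K$ large, or a triangular wave with many teeth) that no fixed architecture can approximate to accuracy, say, $\tfrac14\|f\|_\infty$. If $\mathcal A$ were superexpressive it would in particular be superexpressive for $d=1$, so refuting the $d=1$ case suffices.

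The key steps, in order. First, fix a network architecture with activations drawn from $\mathcal A$. The issue is that the inputs to a piecewise Pfaffian activation may wander across several of its Pfaffian pieces $U_n$ as the network input $x$ ranges over $[0,1]$, and the network output need not even be continuous. To handle this, I would argue by induction on the layers: each preactivation of a neuron is a function of $x$ that, on each interval of a finite partition of $[0,1]$, agrees with a Pfaffian function of $x$ (of bounded complexity). The partition refines as we go deeper, but only by a bounded amount at each layer — a new breakpoint can appear only where some preactivation $\ell(x)$ crosses one of the finitely many ``seam points'' $x_k$ of its activation's domain, and on each existing piece $\ell(x)$ is a nonconstant (or else identically constant) Pfaffian function, hence by Proposition \ref{prop:univpfaff} has boundedly many zeros, so $\ell(x)-x_k$ has boundedly many zeros. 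Consequently the total number of pieces after all layers is bounded by a constant $P$ depending only on the architecture and the complexities of the activations in $\mathcal A$, and on each piece the output $F(x)$ equals a Pfaffian function $F_j(x)$ of bounded complexity $(l,\alpha,\beta)$.

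Second, I turn this into a bound on the number of level crossings. For any real level $c$, the equation $F(x)=c$ on a given piece is $F_j(x)-c=0$; either $F_j\equiv c$ there, or by Proposition \ref{prop:univpfaff} it has at most $Z$ zeros, where $Z$ depends only on the complexity of $F_j$ (note adding the constant $-c$ does not change the complexity). Summing over the $P$ pieces, $F$ takes any value $c$ at most $PZ$ times on $[0,1]$, except possibly on the (finitely many) pieces where $F_j$ is constant. Picking $c$ to avoid those finitely many constant values (two suitable levels $c_1<c_2$ suffice), I get: the number of solutions of $F(x)=c_i$ is at most some constant $C(\mathcal A,\text{architecture})$. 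Third, I exhibit the contradiction: take $f$ to be a triangular wave on $[0,1]$ with $n$ full oscillations between values $0$ and $1$; any function approximating $f$ to within $\tfrac14$ must cross the level $c=\tfrac12$ at least $\Omega(n)$ times. Choosing $n > 4C$ gives the contradiction, so no fixed architecture works, and $\mathcal A$ is not superexpressive for $d=1$, hence not superexpressive.

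The main obstacle, and the step needing the most care, is the inductive complexity-control argument in the first step: one must verify that composing/summing the layer maps keeps everything Pfaffian of bounded complexity on each piece (using part 2 of the Proposition), that the number of new breakpoints introduced per layer stays bounded (using Proposition \ref{prop:univpfaff} applied to each preactivation minus each seam point $x_k$, together with the constant-function alternative), and — a genuine subtlety — that when a preactivation is identically constant on a piece the induction still closes, and when it equals a seam point $x_k$ on an interval rather than at isolated points one handles that degenerate case separately (it only makes the output locally constant there, which is harmless). A secondary point is making sure the final target $f$ is itself continuous so that it is a legitimate element of $C([0,1])$; the triangular wave is, so this is immediate.
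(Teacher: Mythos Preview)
Your proposal is correct and follows essentially the same route as the paper: both show by induction over the hidden neurons that the network output is piecewise Pfaffian with a bounded number of pieces and bounded complexity on each, then invoke Proposition~\ref{prop:univpfaff} to bound the number of oscillations, contradicting the approximability of a highly oscillatory target (the paper uses $\sin((N{+}1)\pi x)$ and counts sign changes where you use a triangular wave and count level crossings at $c=\tfrac12$). The only cosmetic difference is your explicit handling of the constant-on-a-piece case by choosing the level $c$ to avoid those finitely many values, which the paper's sign-change formulation sidesteps automatically.
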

\begin{proof}
Suppose that $\mathcal A$ is superexpressive, and there is a fixed one-input network architecture allowing us to approximate any univariate function $f\in C([0,1])$. Then for any $N$ we can choose the network weights so that the  function $\widetilde f$ implemented by the network has at least $N$ sign changes, in the sense that there are points $0\le a_0<\ldots<a_N\le 1$ such that $(-1)^n\widetilde f(a_n)>0$ for all $n$.  Indeed, this follows simply by approximating the function $f(x)=\sin((N+1)\pi x)$ with an error less than 1. We will show, however, that this $N$ cannot be arbitrarily large if the activations are from a finite piecewise Pfaffian family. 

\begin{lemma}
If the activations belong to a finite piecewise Pfaffian family $\mathcal A$, then any function $\widetilde f$ implemented by the network is piecewise Pfaffian. Moreover, the number of respective intervals $U_n$ as well as the complexity of each restriction $f|_{U_n}$ do not exceed some finite values only depending on the family $\mathcal A$ and the network architecture.
\end{lemma} 
\begin{proof} This can be proved by induction on the number of hidden neurons in the network. The base of induction corresponds to networks without hidden neurons; in this case the statement is trivial. Now we make the induction step. Given a network, choose some hidden neuron whose output is not used by other hidden neurons (i.e., choose a neuron in the ``last hidden layer''). With respect to this neuron, we can decompose the network output as 
\begin{equation}\label{eq:wtfpfaf}
\widetilde f(x)=c\sigma_k\Big(\sum_{s=1}^Kc_s\widetilde f_{s}(x)+h\Big)+\sum_{s=1}^K c'_s\widetilde f_{s}(x)+h'.
\end{equation}
Here, $\sigma_k$ is the activation function residing at the chosen neuron, $\widetilde f_s$ are the signals going out of the other hidden and input neurons, and  $c,c_s,c_s',h,h'$ are various weights.  By inductive hypothesis, all functions $\widetilde f_s$ here are piecewise Pfaffian. Moreover, by taking intersections, the segment $[0,1]$ can be divided into finitely many open intervals $I_j$ separated by finitely many points $x_l$ so that each of the functions $\widetilde f_s$ is Pfaffian on each interval $I_j.$ The number of these intervals $I_j$ and the complexities of $f_s|_{I_j}$ are bounded by some finite values depending only on the family $\mathcal A$ and the network architecture. We see also that the linear combination $F(x) = \sum_{s=1}^Kc_s\widetilde f_{s}(x)+h$ appearing in Eq.~\eqref{eq:wtfpfaf} is Pfaffian on each interval $I_j.$ 

Observe next that the composition $\sigma_k\circ F$ is  piecewise Pfaffian on each interval $I_j$. Indeed, let $U^{(k)}_r$ and $x^{(k)}_r$ be the finitely many open intervals and points associated with the activation $\sigma_k$ as a piecewise Pfaffian function. By Proposition \ref{prop:univpfaff}, for each $r$, the pre-image $(F|_{I_j})^{-1}(x_r^{(k)})$ is either the whole interval $I_j$ or its finite subset. In the first case, $\sigma_k\circ F$ is constant and thus trivially Pfaffian on $I_j$. In the second case, the interval $I_j$ can be subdivided into sub-intervals $I_{j,m}$ such that each image $F(I_{j,m})$ belongs to one of the intervals $U^{(k)}_r$ so that $\sigma_k\circ F$ is Pfaffian on $I_{j,m}$. The number of these sub-intervals and the complexities of the restrictions are bounded by some finite numbers depending on the activation $\sigma_k$ and the complexity of $F|_{I_j}.$  

Returning to representation \eqref{eq:wtfpfaf}, we see that $\widetilde f$ is Pfaffian on each interval $I_{j,m}$; moreover, the total number of these intervals as well as the complexities of the restrictions $\widetilde f|_{I_{j,m}}$ are bounded by finite numbers determined by the family $\mathcal A$ and the architecture, thus proving the claim. 
\end{proof}   
The lemma implies that some interval $U_n$ in which $\widetilde f$ is Pfaffian and has a bounded complexity can contain an arbitrarily large number of sign changes of $\widetilde f$. This gives a contradiction with  Proposition \ref{prop:univpfaff}.
\end{proof}

\section{Discussion}
We have given examples of simple explicit activation functions that allow to approximate arbitrary functions using fixed-size networks (Theorem \ref{th:main}), and we have also shown that this can not be achieved with the common practical activations (Theorem \ref{th:main2}). We mention two interesting questions left open by our results. 

First, our existence result (Theorem \ref{th:main}) is of course purely theoretical: though the network is small, a huge approximation complexity is hidden in the very special choice of the network weights.  Nevertheless, assuming that we can perform computations with any precision, one can ask if it is possible to algorithmically find network weights providing a good approximation. The main difficulty here is to find a value $s$ such that $(\phi(sa_n))_{n=1}^N$  is close to the given $N$-dimensional point. Such a value exists by  Lemma \ref{lm:irrwind} on the density of irrational winding, and the proof of the lemma is essentially constructive, so theoretically one can perform the necessary computation and find the desired $s$.  However, the proof is based on the pigeonhole principle and is very prone to the curse of dimensionality (with dimensionality here corresponding to the number $N$ of fitted data points), making this computation practically unfeasible even for relatively small $N$.

Another open question is whether the function $\sin$ alone is superexpresive. This can not be ruled out by the methods of Section \ref{sec:absence}, since $\sin$ has an infinite Pfaffian complexity on $\mathbb R$. More generally, one can ask if there are individual superexpressive activations that are elementary and real analytic on the whole $\mathbb R$. A repeated computation of antiderivatives using  Lemma \ref{lm:antider} allows us to construct a piecewise elementary superexpressive function of any finite smoothness, but not analytic on $\mathbb R$.

\appendix
\section{Proof of Lemma \ref{lm:irrwind}}\label{sec:proofirrwind}
It is convenient to endow the cube $[0,1)^N$ with the topology of the torus $\mathbb T^N=\mathbb R^N/\mathbb Z^N$ by gluing the endpoints of the interval $[0,1]$. Though the lemma is stated in terms of the original topology on $[0,1)^N$, it is clear that a subset is dense in the original topology if and only if it is dense in the topology of the torus. Accordingly, when considering the distance between two points $\mathbf b_1,\mathbf b_2\in [0,1)^N,$ it will be convenient to  use the distance between the corresponding cosets, i.e. $\rho(\mathbf b_1,\mathbf b_2)=\min_{\mathbf z_1,\mathbf z_2\in \mathbb Z^N}|\mathbf b_1+\mathbf z_1-(\mathbf b_2+\mathbf z_2)|,$ where $|\cdot |$ is the usual euclidean norm. Note that this $\rho$ is a shift--invariant metric on the torus.

The proof of the lemma is by induction on $N$. The base $N=1$ is obvious (a single number $a_1$ is rationally independent iff $a_1\ne 0$). 
Let us make the induction step from $N-1$ to $N$, with $N\ge 2$. 

Given the rationally independent numbers $a_1,\ldots,a_N,$ first observe that none of them equals 0. Let $s_0=\tfrac{1}{a_N}$. Let $\phi(x)=x-\lfloor x\rfloor$ as in Eq.~\eqref{eq:phi}. If $s=ms_0$ with some integer $m$, then $\phi(ms_0 a_N)=0,$ so that the points $\mathbf b_m=(\phi(ms_0 a_1),\ldots,\phi(m s_0 a_N))$ lie in the $(N-1)$-dimensional face $[0,1)^{N-1}$ of the full set $[0,1)^N.$ 

Observe that the points $\mathbf b_m$ are different for different integer $m$'s. Indeed, if $\mathbf b_{m_1}=\mathbf b_{m_2}$ for some integer $m_1\ne m_2,$ then there are some  integers $p_1,\ldots,p_N$ such that $(m_1-m_2)s_0a_n=p_n$ for all $n=1,\ldots,N.$ But then the numbers $a_1,\ldots,a_N$ are not rationally independent, since, e.g., $(m_1-m_2)a_1=\tfrac{p_1}{s_0}=p_1a_N$.

Since  the points $\mathbf b_m$ are distinct, they form an infinite set in $[0,1)^{N-1}$. Then for any $\epsilon$ we can find a pair of different points $\mathbf b_{m_1}$ and $\mathbf b_{m_2}$ separated by a distance $\rho(\mathbf b_{m_1},\mathbf b_{m_2})<\epsilon.$ Note that the distance $\rho(\mathbf b_{m_1},\mathbf b_{m_2})$ only depends on the difference $m_2-m_1$, so we can assume that $m_1=0$:
\begin{equation*}
\rho(\mathbf b_{0},\mathbf b_{m_2})=\rho(\mathbf 0,\mathbf b_{m_2})<\epsilon.
\end{equation*} 
By definition of $\rho$, we can then find $\mathbf z\in\mathbb Z^N$ such that for $\mathbf b_{m_2}'=\mathbf b_{m_2}-\mathbf z$ we have
\begin{equation}\label{eq:rhoeps}|\mathbf b_{m_2}'|=\rho(\mathbf 0,\mathbf b_{m_2})<\epsilon.\end{equation}

We can write $\mathbf b_{m_2}'$ in the form $$\mathbf b_{m_2}'=(m_2s_0a_{1}-p_{1},\ldots,m_2s_0a_{N-1}-p_{N-1},0)$$ with some integers $p_{1},\ldots,p_{N-1}$.  Observe that the first $N-1$ components $b_{m_2,n}'$ of $\mathbf b_{m_2}'$ are rationally independent. Indeed, if $\sum_{n=1}^{N-1}\lambda_nb_{m_2,n}'=0$ with some rational $\lambda_n$, then, by expressing this identity in terms of the original values $a_n$, we get $$\sum_{n=1}^{N-1}\lambda_na_n-\tfrac{1}{m_2}\sum_{n=1}^{N-1}\lambda_n p_{n}a_N=0,$$ so $\lambda_n\equiv 0$ by the rational independence of $a_n.$

Consider now the set  $$Q'_{N-1}=\{\phi(tb_{m_2,1}'),\ldots,\phi(tb_{m_2,N-1}')): t\in\mathbb R\}.$$ On the one hand, by induction hypothesis, the set $Q'_{N-1}$ is dense in $[0,1)^{N-1},$ because the numbers $b_{m_2,n}'$ are rationally independent. 
On the other hand, observe that the points in $Q'_{N-1}$ corresponding to integer $t$ also belong to the set $Q_N=\{(\phi(sa_1),\ldots,\phi(sa_N)): s\in\mathbb R\}$: specifically, the respective $s=m_2ts_0.$ It follows that for any $\mathbf b\in [0,1)^{N-1}$ we can find a point $\widetilde{\mathbf b}$ of the set $Q_N$ at a distance at most $2\epsilon$ from $\mathbf b$:      
first find a point $\widehat{\mathbf b}\in Q'_{N-1}$ such that $|\widehat{\mathbf b}-\mathbf b|<\epsilon$, and then, if $\widehat{\mathbf b}$ corresponds to some $t=t_0$ in $Q'_{N-1}$, take $\widetilde{\mathbf b}$ corresponding to $t=\lfloor t_0\rfloor.$ The distance $\rho(\widehat{\mathbf b},\widetilde{\mathbf b})<\epsilon$ by Eq.~\eqref{eq:rhoeps} and because $|t_0-\lfloor t_0\rfloor|< 1$:
$$\rho(\widehat{\mathbf b},\widetilde{\mathbf b})\le |t_0\mathbf b_{m_2}'-\lfloor t_0\rfloor\mathbf b_{m_2}'|<|\mathbf b_{m_2}'|<\epsilon.$$

The above argument shows that the face $[0,1)^{N-1}=\{\mathbf b\in [0,1)^N: b_N=0\}$ can be approximated by points of $Q_N$ with $s$ belonging to the set $S=\{m_2ts_0\}_{t\in\mathbb Z}.$ Any other $(N-1)$-dimensional cross-section $\{\mathbf b\in [0,1)^N: b_N=c\}$ is then approximated by the points of $Q_N$ with $s\in S+cs_0$: indeed, $s=cs_0$ gives us one point  in this cross-section, and additional shifts by $\Delta s\in S$ allow us to approximate any other point with the same $b_N$.  

\section*{Acknowledgment}
I thank Maksim Velikanov for useful feedback on the preliminary version of the paper.

\bibliography{superexpressive}
\bibliographystyle{icml2021}

\end{document}